\documentclass{article}
\PassOptionsToPackage{sort}{natbib}
\usepackage{preprint,times}

\usepackage{amsmath,amsfonts,bm}

\def\eqref#1{equation~\ref{#1}}
\def\1{\bm{1}}

\DeclareMathAlphabet{\mathsfit}{\encodingdefault}{\sfdefault}{m}{sl}
\SetMathAlphabet{\mathsfit}{bold}{\encodingdefault}{\sfdefault}{bx}{n}

\newcommand{\E}{\mathbb{E}}

\newcommand{\KL}{D_{\mathrm{KL}}}

\usepackage{hyperref}
\usepackage{url}
\definecolor{mydarkblue}{rgb}{0,0.08,0.45}
\hypersetup{colorlinks=true,linkcolor=mydarkblue,citecolor=mydarkblue,urlcolor=mydarkblue}
\usepackage{xcolor}
\usepackage{amsthm}
\usepackage{bm}
\usepackage{booktabs}
\usepackage[sort]{natbib}
\usepackage[capitalise,nameinlink]{cleveref}
\usepackage{enumitem}
\usepackage{graphicx}
\usepackage{wrapfig}
\usepackage{subcaption}

\newcommand{\btheta}{\bm{\theta}}
\newcommand{\Dn}{D_n}
\newcommand{\dn}{d_n}
\newcommand{\DN}{D_N}
\newcommand{\prior}{\Pi}
\newcommand{\sizeprior}{\Pi_N}
\newcommand{\ppd}[1]{\pi(#1)}
\newcommand{\ppdbar}[1]{\bar{\pi}(#1)}
\newcommand{\pfn}[1]{q_{\btheta}(#1)}
\newcommand{\task}[1]{p(#1)}
\newcommand{\gap}{\Delta_{\btheta}}

\newcommand{\Ycal}{\mathcal{Y}}
\newcommand{\Xcal}{\mathcal{X}}
\newcommand{\featurecollection}{\mathfrak{X}}
\newcommand{\featurecollectionandD}{\mathfrak{D}}
\newcommand{\responsecollection}{\mathfrak{Y}}
\newcommand{\featurevalues}{\mathfrak{x}}
\newcommand{\responsevalues}{\mathfrak{y}}

\newcommand{\predgap}{\Delta_\pi(y\mid \bm{x},\dn)}
\newcommand{\predgapnoarg}{\Delta_\pi}

\theoremstyle{definition}
\newtheorem{definition}{Definition}
\theoremstyle{plain}
\newtheorem{proposition}{Proposition}

\newtheorem{corollary}{Corollary}
\crefname{proposition}{Proposition}{Propositions}
\crefname{corollary}{Corollary}{Corollaries}
\crefname{definition}{Definition}{Definitions}
\crefname{remark}{Remark}{Remarks}

\title{Characterizing the Separability Gap in Prior-Data Fitted Networks}

\author{Julius Kobialka, Manuel H\"ulskamp, Thomas Nagler, David R\"ugamer\\
Department of Statistics, LMU Munich\\
Munich Center for Machine Learning\\
\texttt{first.lastname@stat.uni-muenchen.de}
}

\iclrfinalcopy
\begin{document}

\maketitle

\begin{abstract}
Prior-data fitted networks (PFNs) are transformers pretrained on simulated datasets to approximate Bayesian posterior predictive distributions. Standard PFN theory assumes a separable task prior, under which the feature distribution and the mechanism generating labels from features are independent. We show that common PFN priors are non-separable, even in simple settings. Consequently, standard pretraining targets a predictive distribution based on a task posterior updated using both the labeled context and the query features. We characterize its difference from the predictive based on the context posterior alone and show that this difference vanishes under posterior concentration. Under misspecification (i.e., when the true joint distribution lies outside the prior's support), however, prior non-separability can favor better feature fit at the expense of conditional label fit. We illustrate the consequences for feature attribution, Bayesian optimization, and martingale checks of Bayesian behavior. The same dependence also enables PFNs to learn from additional unlabeled observations. We demonstrate this with SemiTabICL, a PFN explicitly pretrained to use these observations, which improves accuracy over a matched supervised baseline. Our findings highlight the need for explicit rules on unlabeled data use in PFN evaluation and establish the dependence between feature and label distributions as a consequential prior design choice.
\end{abstract}

\section{Introduction}
\label{sec:intro}

Tabular foundation models such as TabPFN and TabICL are prior-data fitted networks~\citep[PFNs;][]{qu2025tabicl,hollmann2025tabpfnv2,hollmanntabpfn,grinsztajn2025tabpfnv25,grinsztajn2026tabpfnv3,qu2026tabiclv2}. A PFN is an approximator to a family of posterior predictive distributions \citep[PPDs;][]{nagler23statistical}, usually implemented as a transformer with parameters $\btheta$ and pretrained on simulated datasets drawn from a prior $\prior$ over tasks. A common data prior choice are structural causal models (SCMs).
We use uppercase letters for random data and lowercase letters for their realizations. At inference, the PFN receives an observed context $\dn = \{y_i, \bm{x}_i\}_{i=1}^n$ and a query feature vector $\bm{x}$ and returns a predictive distribution $\pfn{y \mid \bm{x}, \dn}$ in its forward pass. 
For classification, PFNs are pretrained by cross-entropy on simulated contexts and query pairs drawn from the same task. The target is therefore the conditional distribution of the query label given its feature vector and the context~\citep[Theorem 2.2]{nagler23statistical}.
A trained PFN approximates this target with finite capacity and a finite pretraining budget.

While previous research on PFNs has asked \textit{how accurately} a trained network approximates its target PPD, we ask a prior question:
\textit{Which predictive distribution does pretraining actually target?} As outlined below, our motivation stems from the (often implicitly made) assumption of \textit{separability} of the data prior, which has so far received too little attention. 

\textbf{Bayesian target}\quad Concretely, let $p \sim \prior$ denote a task distribution, with conditional label distribution $\task{y \mid \bm{x}}$. Here, we distinguish between two types of posterior predictives:
\begin{equation}
\begin{aligned}
    & \text{\itshape \,\,\,\,query-updated predictive:}  & \ppd{y \mid \bm{x}, \dn}
    &= \int \task{y \mid \bm{x}}\, d\prior(p \mid \dn, \bm{x}), \\
    & \text{\itshape context-updated predictive:} & \ppdbar{y \mid \bm{x}, \dn}
    &= \int \task{y \mid \bm{x}}\, d\prior(p \mid \dn).
\end{aligned}
\label{eq:ppd}
\end{equation}
The \emph{query-updated predictive} $\pi$ is the PPD that minimizes pretraining cross-entropy. The \emph{context-updated predictive} $\bar\pi$ averages the same label distribution under the task posterior given the context alone. Both predictives condition on the context features and labels, and evaluate the label distribution at $\bm{x}$. However, only $\pi$ uses $\bm{x}$ to update the task posterior. 

Crucially, the \textbf{existing literature assumes that PFN pretraining targets the context-updated predictive} $\bar\pi$. 
\citet{nagler23statistical} notes in Remark 2.1 that this requires the feature
distribution and the labeling mechanism to be independent under the prior, a
property we call \emph{separability}, but regards the query-dependent alternative
$\pi$ as unintuitive.
This distinction especially matters for the SCM priors used to pretrain state-of-the-art PFNs, as they are typically not separable.

The setting of \Cref{fig:toy_query_evidence} exemplifies this difference using the anticausal setting from \citet{schoelkopf2012causal}. The task is a pair of class centers $\mu_0, \mu_1$ drawn from $\mathcal{N}(0, \tau^2 \bm{I}_2)$, $Y$ is determined by a fair coin flip, and the features are drawn from that label's Gaussian, $\bm{X} \mid Y=k \sim \mathcal{N}(\mu_k, \sigma^2 I_2)$. If the context contains many points of class~$0$ but only a few of class~$1$, the posterior pins down $\mu_0$ (blue cluster) but leaves $\mu_1$ (red cluster) comparatively uncertain. For a query $\bm{x}$ far from both clusters (white star), the query-updated predictive $\pi$ is more likely to assign it label~$1$ due to the smaller probability of $\bm{x}$ belonging to features of class $0$, while the context-updated predictive $\bar\pi$ does not take the feature likelihood into account.

Most current PFNs are trained on non-separable priors like in the previous example. This therefore not only demonstrates that the two models in \cref{eq:ppd} lead to different predictive distributions, but also shows that PFNs have a close link to semi-supervised machine learning as they leverage the unlabeled query features. In this paper, we outline how this perspective proves useful for unlocking empirical performance gains as well as obtaining a better understanding of the working mechanisms of PFNs. 

\begin{figure}
    \centering
    \includegraphics[width=0.73\linewidth]{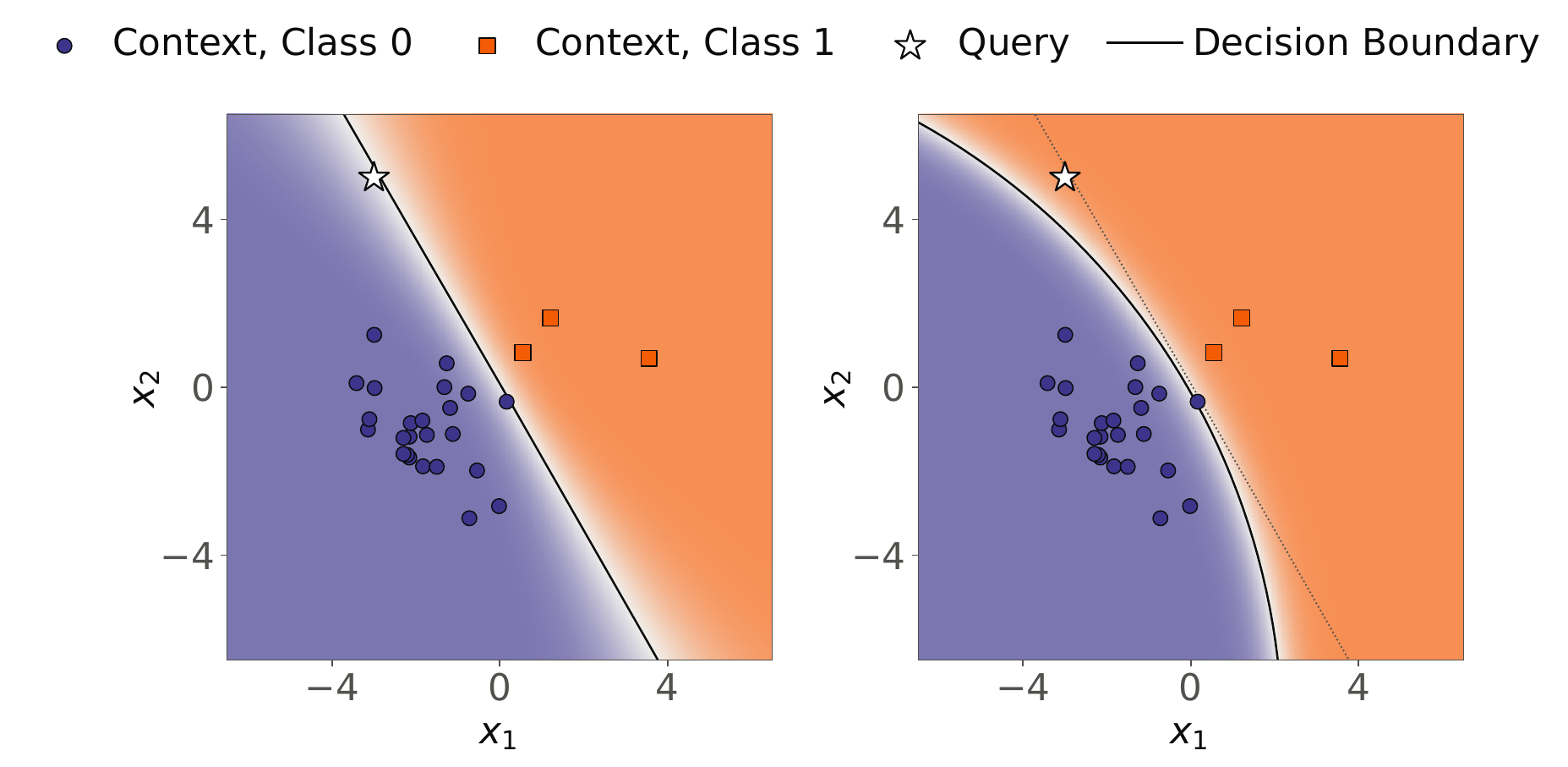}
    \caption{Illustration of the anticausal setting from \citet{schoelkopf2012causal}: Two Gaussian class centers
$\mu_0, \mu_1$ generate the features from the label, such that the location of an unlabeled point is informative about the labeling mechanism. The mechanism in the left figure does not allow
the query's location to update the belief about the centers; the right figure does use the point's position.}
    \label{fig:toy_query_evidence}
\end{figure}

\textbf{Our contributions}\quad By studying the difference of predictives in \cref{eq:ppd} through various lenses, we make the following contributions:
\begin{itemize}[leftmargin=1.5em]
    \item We prove that prior non-separability already arises in simple SCMs and that mixing (separable) priors, as in TabPFN, TabICL, and Mitra, preserves separability if and only if the regimes share the same prior over feature distributions or the same prior over conditional label distributions.
    \item We characterize a \emph{predictive gap} between $\pi$ and $\bar\pi$ at finite context sizes and show that it vanishes for typical queries under posterior concentration. Under misspecification, their common limit can nevertheless have an arbitrarily large \emph{optimality gap} relative to the best supported conditional label law.
    \item We study the practical consequences for downstream use and evaluation of PFNs, including interpretability, Bayesian optimization, semi-supervised prediction, and martingale checks. We test which type of predictive standard pretraining recovers and whether additional unlabeled rows improve prediction.
\end{itemize}

\section{Properties of the different predictives}
\label{sec:properties}

\paragraph{Model and notation}
We identify a task with its distribution $p$, comprising a feature law $p_X$ and a conditional label law $p_{Y\mid X}$. We use lowercase $p$ for both a probability law and its density or mass function, with the meaning determined by the argument; for example, $p(y\mid\bm{x})=p_{Y\mid X}(y\mid\bm{x})$ and $p(\bm{x})=p_X(\bm{x})$. The prior $\prior$ is a distribution over these task laws, so posterior integrals are taken directly over $p$.
Given a task $p \sim \Pi$, let $\bm{X}_1,\dots,\bm{X}_n$ denote the context features and $\bm{X}$ the query feature. Given a task $p\sim\prior$, the context rows $(Y_1,\bm{X}_1),\dots,(Y_n,\bm{X}_n)$ and the query row $(Y,\bm{X})$ are independent and identically distributed (iid) draws from $p$. Each feature vector is drawn from $p_X$ and its label from $p_{Y\mid X}$, that is, $\bm{X}_i\sim p_X$ and $Y_i\mid\bm{X}_i\sim p(\cdot\mid\bm{X}_i)$, and likewise for $(Y,\bm{X})$. The random context is $\Dn=(Y_i,\bm{X}_i)_{i=1}^n$, its realization is $\dn=(y_i,\bm{x}_i)_{i=1}^n$, and the predictor receives $(\bm{X},\Dn)$.

The following definition is central in this work.
\begin{definition}[Separable prior]
\label{def:separable}
Identify each task $p=p_{Y\mid X}p_X$ with the pair $(p_X,p_{Y\mid X})$. The prior is \emph{separable} if its induced law on these pairs factorizes as
\[
    \prior = \prior_X\otimes\prior_{Y\mid X},
    \qquad
    d\prior(p)
    = d\prior_X(p_X)\,d\prior_{Y\mid X}(p_{Y\mid X}).
\]
Here $\prior_X$ and $\prior_{Y\mid X}$ are the marginal priors over the feature law and conditional label law. Thus these two laws are independent under $\prior$. Otherwise the prior is \emph{non-separable}.
\end{definition}

Under separability, the marginal posterior of $p_{Y\mid X}$ does not depend on the query feature $\bm{x}$:
\begin{align*}
    d\prior(p_{Y\mid X}\mid\dn,\bm{x})
    &= d\prior(p_{Y\mid X}\mid\dn) \propto \prod_{i=1}^n p_{Y\mid X}(y_i\mid\bm{x}_i)\, d\prior_{Y\mid X}(p_{Y\mid X});
\end{align*}
see \cref{prop:nonseparable-minimizer} in \cref{app:bop}.
Separability is thus sufficient for equality of the two predictives in \cref{eq:ppd}, i.e.~$\pi=\bar\pi$. As shown in the following, a non-separable prior permits $\pi\neq\bar\pi$, but does not require it at every context and query. Under a non-separable prior, the context features are also informative about $p_{Y\mid X}$, and $\bar\pi$ uses this information because the posterior $\prior(\cdot\mid\dn)$ contains their likelihood $\prod_{i=1}^n p_X(\bm{x}_i)$. The two predictives therefore differ only in the feature likelihood $p_X(\bm{x})$ of the query.

\subsection{SCM priors are non-separable}

In an SCM prior, a random graph, random node functions, and a random choice of the response node generate the features and the label jointly. The feature distribution and the conditional label distribution are then two functionals of the same random mechanism, and thus we should not expect separability to hold for these priors. In fact, it already fails in simple cases like a linear Gaussian SCM with a random label node (see \cref{prop:scm-nonseparable}).

Mixing multiple generator regimes provides another potential source of non-separability. The original TabPFN prior mixes SCM and Bayesian neural network datasets equally \citep{hollmanntabpfn}. TabICL mixes neural and tree-based SCM datasets in a $70/30$ ratio \citep{qu2025tabicl}. Mitra mixes SCM and tree-based priors \citep{zhang2025mitra}, and Mitra-v2 adds a hybrid SCM to this outer mixture \citep{tao2026mitrav2}. TabICLv2 also combines neural, tree, Gaussian-process, and other functions within individual SCMs \citep{qu2026tabiclv2}. Even if the feature and label laws of these priors were separable within each regime, their mixture is non-separable when both induced laws differ across regimes, as the following result shows (proof in \cref{{proof:mixture-nonseparability}}).

\begin{proposition}
\label{prop:mixture-nonseparability}
Let $\prior_{X,s}$ and $\prior_{Y\mid X,s}$ be priors on the induced feature and conditional label laws for $s \in \{0, 1\}$. For $\lambda \in (0, 1)$, set
\begin{align*}
    \prior
    &= (1 - \lambda)\,\prior_{X,0} \otimes \prior_{Y\mid X,0}
       + \lambda\,\prior_{X,1} \otimes \prior_{Y\mid X,1}.
\end{align*}
Then $\prior$ is separable if and only if $\prior_{X,0} = \prior_{X,1}$ or $\prior_{Y\mid X,0} = \prior_{Y\mid X,1}$.
\end{proposition}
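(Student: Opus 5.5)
We write $A_s=\prior_{X,s}$ and $B_s=\prior_{Y\mid X,s}$, so that $\prior=(1-\lambda)A_0\otimes B_0+\lambda A_1\otimes B_1$ is a measure on the product space of pairs $(p_X,p_{Y\mid X})$. First we compute its two marginals directly:
\[
\prior_X=(1-\lambda)A_0+\lambda A_1,\qquad \prior_{Y\mid X}=(1-\lambda)B_0+\lambda B_1.
\]
By \cref{def:separable}, $\prior$ is separable iff $\prior=\prior_X\otimes\prior_{Y\mid X}$. The whole proof reduces to comparing these two product measures on measurable rectangles. Two finite measures on a product $\sigma$-algebra that agree on all rectangles coincide, by the $\pi$--$\lambda$ theorem. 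So it suffices to work with the numbers $\prior(E\times F)$ for measurable sets $E$ of feature laws and $F$ of conditional label laws.

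\textbf{Key identity.} Expanding the product of marginals and subtracting gives, for every rectangle,
\[
\prior(E\times F)-\prior_X(E)\,\prior_{Y\mid X}(F)
=\lambda(1-\lambda)\,\bigl(A_0(E)-A_1(E)\bigr)\bigl(B_0(F)-B_1(F)\bigr).
\]
This is a routine algebraic check: the coefficients $(1-\lambda)-(1-\lambda)^2=\lambda(1-\lambda)$ on the diagonal terms and $-\lambda(1-\lambda)$ on the cross terms combine into the stated product.

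\textbf{``If'' direction.} If $A_0=A_1$ or $B_0=B_1$, the right-hand side vanishes for all $E,F$. Hence $\prior$ and $\prior_X\otimes\prior_{Y\mid X}$ agree on rectangles, and therefore everywhere, so $\prior$ is separable. One could also see this directly: if, say, $A_0=A_1=A$, then $\prior=A\otimes\bigl((1-\lambda)B_0+\lambda B_1\bigr)$.

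\textbf{``Only if'' direction.} Suppose $A_0\neq A_1$ and $B_0\neq B_1$. Two probability measures that differ must differ on some measurable set, so there exist $E$ with $A_0(E)\neq A_1(E)$ and $F$ with $B_0(F)\neq B_1(F)$. Since $\lambda\in(0,1)$, the identity yields a nonzero discrepancy on the rectangle $E\times F$, so $\prior$ is not a product of its marginals.

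We must also make sure separability cannot hold with some other factorization. This is immediate: if $\prior=\mu\otimes\nu$ with probability measures $\mu,\nu$, then $\mu$ and $\nu$ are necessarily the marginals of $\prior$. There is no real obstacle in the argument. The only point needing care is this measure-theoretic bookkeeping, namely that the measures live on a product of measurable spaces of laws, and that agreement on rectangles suffices by the $\pi$--$\lambda$ argument.
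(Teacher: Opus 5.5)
Your proof is correct and follows the paper's route. The paper computes the same marginals and the identity $\prior-\prior_X\otimes\prior_{Y\mid X}=\lambda(1-\lambda)(\prior_{X,1}-\prior_{X,0})\otimes(\prior_{Y\mid X,1}-\prior_{Y\mid X,0})$, then cites that a tensor product of finite signed measures vanishes iff one factor does; your rectangle evaluation on $E\times F$, with the $\pi$--$\lambda$ argument for the converse, simply verifies that cited fact explicitly.
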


A mixture of regimes that differ in both their feature and their label distributions, as in all of the production priors above, is therefore non-separable, whatever the structure within each regime.

\subsection{Finite-sample difference}

So far, we have shown that PFN pretraining targets $\pi$, and that priors of current PFNs are unlikely to be separable, so that we should expect $\pi \neq \bar\pi$. 

To better understand this difference, we first characterize how observing the query feature changes the predictive at a fixed context, and then examine what remains as the context grows. At finite context sizes, the two predictives differ as follows (proof in \cref{proof:covariance-shift}).

\begin{corollary}
\label{cor:covariance-shift}
Let $\prior_n = \prior(\cdot \mid \dn)$, with $d\prior_n(p) \propto \prod_{i=1}^n p_X(\bm{x}_i)\,\task{y_i\mid\bm{x}_i} d\prior(p)$, be the posterior given the context, and let $m(\bm{x} \mid \dn) = \E_{\prior_n}[p_X(\bm{x})]$ be the feature predictive. Then
\begin{equation}
    \predgap := \ppd{y \mid \bm{x}, \dn} - \ppdbar{y \mid \bm{x}, \dn}
    = \frac{\operatorname{Cov}_{\prior_n}\bigl(\task{y \mid \bm{x}},\, p_X(\bm{x})\bigr)}{m(\bm{x} \mid \dn)} ,
    \label{eq:covariance-shift}
\end{equation}
where $\predgap$ denotes the predictive gap between the two predictives.
\end{corollary}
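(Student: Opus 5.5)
The plan is a direct Bayes computation. I will express the query-updated posterior $\prior(\cdot\mid\dn,\bm{x})$ as a reweighting of the context posterior $\prior_n$, and then rewrite the resulting difference of expectations as a covariance. Since the query row $(Y,\bm{X})$ is iid with the context rows given $p$, observing the unlabeled query feature $\bm{x}$ adds one factor $p_X(\bm{x})$ to the likelihood. Hence
\[
  d\prior(p\mid\dn,\bm{x}) \;=\; \frac{p_X(\bm{x})\,d\prior_n(p)}{\int p'_X(\bm{x})\,d\prior_n(p')} \;=\; \frac{p_X(\bm{x})}{m(\bm{x}\mid\dn)}\,d\prior_n(p).
\]
The label $y$ of the query never enters, because we only condition on $\bm{x}$: marginalizing the query label gives $\sum_y p(y\mid\bm{x})p_X(\bm{x}) = p_X(\bm{x})$. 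To justify the display I will write the joint density of $(\Dn,\bm{X})$ given $p$ as $\prod_i p_X(\bm{x}_i)p(y_i\mid\bm{x}_i)\cdot p_X(\bm{x})$ and apply Bayes' rule in two stages, first to $\dn$ and then to $\bm{x}$. Two-stage updating is valid because of the conditional independence given $p$.

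Next I substitute this into \eqref{eq:ppd}:
\[
  \ppd{y\mid\bm{x},\dn} = \frac{\E_{\prior_n}\bigl[\task{y\mid\bm{x}}\,p_X(\bm{x})\bigr]}{m(\bm{x}\mid\dn)}, \qquad \ppdbar{y\mid\bm{x},\dn} = \E_{\prior_n}\bigl[\task{y\mid\bm{x}}\bigr].
\]
Multiplying $\bar\pi$ by $m(\bm{x}\mid\dn)/m(\bm{x}\mid\dn)$ and subtracting gives a numerator of $\E_{\prior_n}[\task{y\mid\bm{x}}p_X(\bm{x})]-\E_{\prior_n}[\task{y\mid\bm{x}}]\,\E_{\prior_n}[p_X(\bm{x})]$. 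This is exactly $\operatorname{Cov}_{\prior_n}(\task{y\mid\bm{x}},p_X(\bm{x}))$, which yields \eqref{eq:covariance-shift}.

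The main obstacle is measure-theoretic rather than algebraic. I have to make sure the posteriors and the covariance are well defined. This needs three conditions:
\begin{itemize}
\item $m(\bm{x}\mid\dn)\in(0,\infty)$, so that the query-updated posterior exists. This holds for $\bm{x}$ in the support of the marginal feature predictive, i.e.\ almost surely.
\item $\task{y\mid\bm{x}}\le 1$ in the classification setting, or integrability more generally, so that the product expectation is finite.
\item The densities are taken with respect to a common dominating measure, so that jointly measurable versions of $p_X(\bm{x})$ and $p(y\mid\bm{x})$ in $p$ exist.
\end{itemize}
I will state these as standing assumptions and note that the identity holds for $\bm{x}$ outside a null set. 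This is the same regime in which \cref{prop:nonseparable-minimizer} identifies $\pi$ as the pretraining minimizer.
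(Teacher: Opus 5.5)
Your proposal is correct and follows the paper's proof. Both write the query-updated posterior as $\prior_n$ reweighted by $p_X(\bm{x})/m(\bm{x}\mid\dn)$, express $\ppd{y\mid\bm{x},\dn}$ as $\E_{\prior_n}[\task{y\mid\bm{x}}\,p_X(\bm{x})]/m(\bm{x}\mid\dn)$, and conclude with $\E[AB]=\E[A]\E[B]+\operatorname{Cov}(A,B)$; your well-definedness conditions (positive, finite $m(\bm{x}\mid\dn)$ and integrability) are left implicit in the paper but are the right ones.
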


The \emph{predictive gap} can be interpreted as a correction term of the context-updated predictive to account for the posterior covariance between the label density at $\bm{x}$ and the conditional feature likelihood $p_X(\bm{x})$ of the query, which it otherwise ignores. Observing the query feature reweights the posterior toward tasks under which that feature value is likely. The query-updated predictive $\pi$ therefore assigns more probability to a label than $\bar\pi$ exactly when these upweighted tasks also predict that label at the query more strongly. Under a separable prior, the covariance is always zero.

The size of $\predgapnoarg$ is not monotone in $n$ or $m(\bm{x} \mid \dn)$ in general. A small feature predictive with a small covariance gives a small correction, and a diffuse posterior need not induce a nonzero covariance. It is large where tasks with comparable posterior weight disagree both in their feature likelihood and in their prediction at $\bm{x}$.

\begin{figure}
    \centering
    \includegraphics[width=0.75\linewidth]{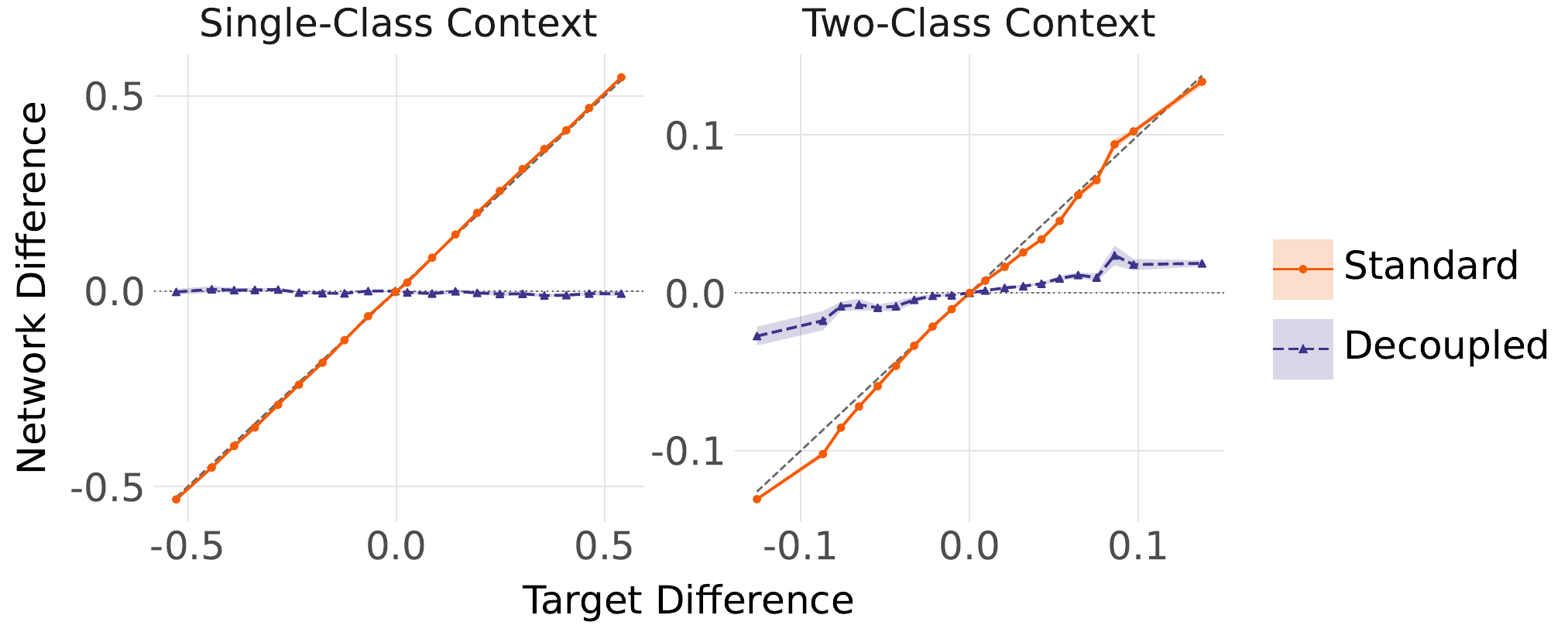}
    \caption{Recovery of the query-updated predictive on the non-separable ordered-cluster prior with context length $n=2$. Standard pretraining follows the identity line, indicating recovery of $\pi$, whereas the difference for the decoupled PFN remains near zero, indicating recovery of $\bar{\pi}$. In single-class contexts, both labeled rows come from the same cluster, so the query is evidence about the unobserved center and the difference between $\pi$ and $\bar\pi$ is larger (note the different axis ranges).}
    \label{fig:approximation_target_exp}
\end{figure}

\Cref{fig:approximation_target_exp} tests which predictive standard PFN pretraining recovers. We pretrain TabICL, denoted by $\pfn{\cdot}$, on a non-separable ordered-cluster prior, whose finite task space makes $\pi$ and $\bar\pi$ exactly computable (\cref{app:exp_ordered_cluster_prior}). Under standard pretraining, the Bayes-optimal target is $\pi$. In contrast, under a decoupled setting, each query label is drawn from a fresh task $p'\sim\prior(\cdot\mid\dn)$, so the target is $\bar\pi$. At $n=2$, the deviation of the learned query-updated predictive from the context-updated predictive, $\pfn{1\mid\bm{x},\dn}-\ppdbar{1\mid\bm{x},\dn}$, follows the exact difference $\predgapnoarg(1\mid\bm{x},\dn)$ along the identity line. The decoupled PFN's deviation remains near zero. Each PFN thus approximates its own Bayes-optimal target. 

\subsection{Consistency}

The covariance in (\ref{eq:covariance-shift}) describes the correction at a given context. Under posterior concentration, this correction vanishes for typical queries as the number of context rows grows. The following result makes this precise. TV and $H$ denote the total variation and Hellinger distances (proof in \cref{proof:nonseparable-consistency}).

\begin{proposition}
\label{prop:nonseparable-consistency}
Let each task $p$ have joint density $p(y, \bm{x})$, and let $\Dn$ consist of iid rows from the true joint density $p_0$. Write $\prior_n = \prior(\cdot \mid \Dn)$; all almost-sure statements refer to the iid context sequence generated by $p_0$. Suppose there is a joint density $p^\ast = p^\ast_{Y|X}p^\ast_X$ such that, for every $\epsilon > 0$,
\begin{align*}
    \prior_n\bigl(\{p : H(p, p^\ast) > \epsilon\}\bigr)
    &\to 0
    \qquad \text{almost surely}.
\end{align*}
Set $\delta_n = \int \mathrm{TV}(p, p^\ast)\, d\prior_n(p)$. Then $\delta_n \to 0$ almost surely, and
\begin{align*}
    \int \mathrm{TV}\bigl(\breve\pi(\cdot \mid \bm{x}, \Dn), p^\ast_{Y|X}(\cdot \mid \bm{x})\bigr)
        p^\ast_X(\bm{x})\, d\bm{x}
    &\le 2\delta_n\, , \quad \breve\pi\in\{\pi,\bar\pi\}.
\end{align*}
\end{proposition}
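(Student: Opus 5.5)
The plan has three steps: show $\delta_n \to 0$ from the assumed Hellinger concentration; prove a deterministic lemma that bounds the $p^\ast_X$-averaged conditional TV distance between two joints by twice their joint TV distance; then apply this lemma in two different ways for $\bar\pi$ and for $\pi$.

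\emph{Step 1 ($\delta_n\to 0$).} I use the standard inequality $\mathrm{TV}(p,p^\ast)\le c\,H(p,p^\ast)$, where $c=1$ or $\sqrt2$ depending on the normalisation of $H$, together with $\mathrm{TV}\le 1$. For any fixed $\epsilon>0$, split the integral defining $\delta_n$ over $\{H\le\epsilon\}$ and $\{H>\epsilon\}$. This gives
\[
\delta_n\le c\,\epsilon+\prior_n(\{p:H(p,p^\ast)>\epsilon\}).
\]
The second term tends to $0$ almost surely by assumption, so $\limsup_n\delta_n\le c\,\epsilon$ almost surely. Letting $\epsilon\downarrow 0$ along a countable sequence gives $\delta_n\to0$ almost surely.

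\emph{Step 2 (key lemma).} For any joint density $q=q_{Y|X}q_X$,
\[
\int \mathrm{TV}\bigl(q_{Y|X}(\cdot\mid\bm{x}),p^\ast_{Y|X}(\cdot\mid\bm{x})\bigr)p^\ast_X(\bm{x})\,d\bm{x}\le 2\,\mathrm{TV}(q,p^\ast).
\]
To prove it, write
\[
p^\ast_X(\bm{x})\bigl(q(y\mid\bm{x})-p^\ast(y\mid\bm{x})\bigr)=\bigl(q(y,\bm{x})-p^\ast(y,\bm{x})\bigr)+q(y\mid\bm{x})\bigl(p^\ast_X(\bm{x})-q_X(\bm{x})\bigr).
\]
Take absolute values, sum or integrate over $y$ (which removes $q(y\mid\bm x)$ in the second term), and integrate over $\bm{x}$. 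This bounds the left side by $\mathrm{TV}(q,p^\ast)+\mathrm{TV}(q_X,p^\ast_X)$. The marginal distance satisfies $\mathrm{TV}(q_X,p^\ast_X)\le\mathrm{TV}(q,p^\ast)$, which gives the factor $2$.

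\emph{Step 3a (context-updated predictive $\bar\pi$).} By definition $\bar\pi(\cdot\mid\bm{x},\Dn)$ is the $\prior_n$-mixture of the conditionals $p_{Y|X}(\cdot\mid\bm{x})$. Convexity of TV then gives
\[
\mathrm{TV}(\bar\pi,p^\ast_{Y|X})\le\int\mathrm{TV}\bigl(p_{Y|X}(\cdot\mid\bm x),p^\ast_{Y|X}(\cdot\mid\bm x)\bigr)\,d\prior_n(p).
\]
Integrate against $p^\ast_X$ and use Tonelli to swap the two integrals. Applying the lemma to each $p$ yields the bound $2\int\mathrm{TV}(p,p^\ast)\,d\prior_n=2\delta_n$.

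\emph{Step 3b (query-updated predictive $\pi$).} I do not use convexity here, because the query-dependent weights $p_X(\bm{x})/m(\bm{x}\mid\Dn)$ do not integrate nicely against $p^\ast_X$. Instead, let $\bar q_n=\int p\,d\prior_n(p)$ be the posterior predictive joint density. Its $X$-marginal is $m(\bm{x}\mid\Dn)$, and by \cref{cor:covariance-shift}'s representation its conditional is
\[
\bar q_n(y\mid\bm{x})=\frac{\int p(y\mid\bm x)p_X(\bm x)\,d\prior_n(p)}{m(\bm x\mid\Dn)}=\pi(y\mid\bm{x},\Dn),
\]
wherever $m>0$. Convexity gives $\mathrm{TV}(\bar q_n,p^\ast)\le\delta_n$, and the lemma applied to $q=\bar q_n$ gives the bound $2\delta_n$.

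\emph{Expected obstacles.} These are technical rather than conceptual. The first is the set $\{m(\bm x\mid\Dn)=0\}$, where $\pi$ is defined arbitrarily. The lemma's proof handles this, because there the contribution $p^\ast_X\cdot\mathrm{TV}\le p^\ast_X$ is charged to $|p^\ast_X-q_X|$. The second is verifying joint measurability in $(p,\bm x)$ so that Fubini/Tonelli applies.
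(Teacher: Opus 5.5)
Your proposal is correct and follows essentially the same route as the paper. The paper uses the same $\epsilon$-splitting for $\delta_n\to0$, the same lemma bounding the $p^\ast_X$-averaged conditional TV by $\mathrm{TV}(q_X,p^\ast_X)+\mathrm{TV}(q,p^\ast)\le 2\,\mathrm{TV}(q,p^\ast)$, that lemma applied to the posterior predictive joint $\int p\,d\prior_n(p)$ (whose conditional is $\pi$) for $\pi$, and pointwise convexity plus Tonelli for $\bar\pi$. Your extra care with a countable sequence of $\epsilon$'s and with the set $\{m(\bm{x}\mid\Dn)=0\}$ only makes explicit what the paper leaves implicit.
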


By the triangle inequality, the integrated total variation between $\pi$ and $\bar\pi$ is at most $4\delta_n$. Thus, the query correction in \cref{eq:covariance-shift} vanishes for features drawn from $p^\ast_X$. If the true feature law satisfies $p_{0,X} \ll p^\ast_X$, this also gives convergence of all three integrated distances for queries drawn from $p_{0,X}$.
When $p^\ast$ is the unique KL-optimal joint density, both predictives approach its conditional distribution. The query correction therefore vanishes for typical queries as the context grows. 

\subsection{Misspecification}

Although the gap between $\pi$ and $\bar \pi$ vanishes with the sample size, non-separability of the prior can still affect the common limit. To see why, let $p_0$ denote the true joint density. The joint KL divergence decomposes as
\begin{align*}
     \KL(p_0\,\Vert\,p) &= \KL(p_{0,X}\,\Vert\,p_X)+\KL^{Y\mid X}(p_0\,\Vert\,p),
\end{align*}
where the expected conditional KL divergence,
\begin{align*}
        \KL^{Y\mid X}(p_0\,\Vert\,p)
    &:= \E_{p_{0,X}}\bigl[\KL\bigl(p_0(\cdot\mid\bm{X})\,\Vert\,p(\cdot\mid\bm{X})\bigr)\bigr],
\end{align*}
measures loss in predictive performance relative to the true  mechanism. When the posterior concentrates at a joint KL minimizer $p^\ast$, this limiting law minimizes the sum of the two terms. In particular, it can favor a mechanism that predicts labels less accurately if that mechanism better explains the features. This tradeoff disappears under separability, but can otherwise lead to bad models.

\begin{proposition}
\label{prop:misspecified-limit}
Let $p_0$ be the true joint density, and let $p^\ast$ minimize $\KL(p_0\,\Vert\,p)$ over the support of $\prior$, with $\KL(p_0\,\Vert\,p^\ast)<\infty$.
\begin{enumerate}[label=(\roman*),leftmargin=2em]
    \item If $\prior$ is separable, then $\KL^{Y\mid X}(p_0\,\Vert\,p^\ast)=\inf_{p\in\operatorname{supp}\prior}\KL^{Y\mid X}(p_0\,\Vert\,p)$.
    \item If $\Pi$ is non-separable, the equality in~(i) need not hold, and the gap can be arbitrarily large.
\end{enumerate}
\end{proposition}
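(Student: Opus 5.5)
For part~(i), I will work with the decomposition $\KL(p_0\Vert p)=\KL(p_{0,X}\Vert p_X)+\KL^{Y\mid X}(p_0\Vert p)$ stated above and combine it with a product structure of the support. If $\prior=\prior_X\otimes\prior_{Y\mid X}$, then $\operatorname{supp}\prior=\operatorname{supp}\prior_X\times\operatorname{supp}\prior_{Y\mid X}$ as a subset of pairs $(p_X,p_{Y\mid X})$. This holds because the support of a product measure on a product of (separable metric) spaces is the product of the supports. In the decomposition, the first term depends only on $p_X$ and the second only on $p_{Y\mid X}$.

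Suppose, for contradiction, that some $p'\in\operatorname{supp}\prior$ has $\KL^{Y\mid X}(p_0\Vert p')<\KL^{Y\mid X}(p_0\Vert p^\ast)$. I form the hybrid task $\tilde p=(p^\ast_X,p'_{Y\mid X})$. By the product structure, $\tilde p$ lies in the support. Its joint KL is
\[
\KL(p_{0,X}\Vert p^\ast_X)+\KL^{Y\mid X}(p_0\Vert p').
\]
This is strictly smaller than $\KL(p_0\Vert p^\ast)$, because $\KL(p_{0,X}\Vert p^\ast_X)\le \KL(p_0\Vert p^\ast)<\infty$, so the subtraction is legitimate. That contradicts the minimality of $p^\ast$. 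Hence $p^\ast$ attains the infimum of the conditional term, which proves~(i).

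The main technical point is the identification of the support of the product measure with the product of supports. I will state this assuming the task space carries a topology for which the map $p\mapsto(p_X,p_{Y\mid X})$ is a homeomorphism onto a product of separable metric spaces. If that assumption is inconvenient, I can instead phrase "support" as the set of atoms or parameter values, which suffices for the counterexample in~(ii).

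For part~(ii), I will give an explicit family of two-point priors $\prior=\tfrac12\delta_{p^{(1)}}+\tfrac12\delta_{p^{(2)}}$. A prior supported on two tasks differing in both components is non-separable, since its pair law is not a product, consistent with \cref{prop:mixture-nonseparability}. I take $Y$ binary, $X$ one-dimensional, and true law $p_0$ with $p_{0,X}=\mathcal N(0,1)$ and $p_0(Y=1\mid x)=1$ for $x>0$ and $0$ otherwise. For practical finiteness I smooth this to $\mathrm{Bernoulli}(1-\eta)$ and $\mathrm{Bernoulli}(\eta)$.

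Task $p^{(1)}$ has $p^{(1)}_X=p_{0,X}$ and the flipped mechanism $P(Y=1\mid x)=\eta$ for $x>0$ and $1-\eta$ for $x\le0$. Its feature KL is zero, and its conditional KL is a constant $c_\eta\to\infty$ as $\eta\to0$.

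Task $p^{(2)}$ has the correct (smoothed) conditional, so its conditional KL is zero or small. Its feature law is $\mathcal N(\mu,1)$, with feature KL $\mu^2/2$.

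Choosing $\mu$ with $\mu^2/2>c_\eta$ makes $p^\ast=p^{(1)}$ the joint minimizer. The resulting gap is $\KL^{Y\mid X}(p_0\Vert p^\ast)-\inf=c_\eta$, which can be made arbitrarily large by taking $\eta\to0$ and $\mu$ correspondingly large. Finally, I will note that the gap can also be measured in TV or accuracy: accuracy drops to roughly $\eta$ at $p^\ast$ versus $1-\eta$ at the best supported conditional. This shows the degradation is practically meaningful and not an artifact of KL blowing up.
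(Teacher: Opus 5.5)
Your proposal is correct and follows the paper's proof. In (i) you use that the support of $\prior_X\otimes\prior_{Y\mid X}$ is a product, so the hybrid $(p^\ast_X,p'_{Y\mid X})$ is supported, and then cancel the finite term $\KL(p_{0,X}\,\Vert\,p^\ast_X)$, exactly as the paper does, only phrased as a contradiction. In (ii) your two-point prior, pairing a correct feature law with a wrong mechanism against a shifted feature law with the correct mechanism, is the paper's construction; the paper uses Gaussian labels with KL pairs $(t+1,0)$ and $(0,t)$ where you use a smoothed binary flip.
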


The proof can be found in \cref{proof:misspecified-limit}.
Part~(i) shows that a separable prior attains the best conditional label fit available in its support. Thus, the prior's feature marginal $\Pi_X$ does not affect how well the posterior fits the labels. Under non-separable priors, this is no longer guaranteed. Part~(ii) shows that, in this case, a miscalibrated $\Pi_X$ can make the posterior's label predictions arbitrarily bad. We refer to this as the \emph{optimality gap}.

This tradeoff connects the common limit to the distinction between generative and discriminative estimation under misspecification~\citep{liang2008asymptotic}: the former optimizes joint fit, whereas the latter targets conditional fit. It is also related to the bias changes through which additional unlabeled data can degrade generative classifiers~\citep{cozman2003semi}, although the proposition concerns the shared limit with a single query row. If $p_0$ is a supported joint density, then $\KL^{Y\mid X}(p_0\,\Vert\,p^\ast)=0$. Thus, more context can make $\pi$ and $\bar\pi$ agree without yielding the best label predictions available under the prior: under misspecification, both can favor better feature fit at the expense of label accuracy.

\subsection{What does this tell us?}

PFN pretraining targets the query-updated predictive $\pi$ (\cref{prop:nonseparable-minimizer}), which coincides with the context-updated predictive $\bar\pi$ assumed in the literature when the feature likelihood cancels from the posterior over the labeling mechanism, as it does under separability (\cref{def:separable}). However, even simple SCM priors do not have this property (\cref{prop:scm-nonseparable}) and neither does a mixture of (potentially separable) priors if the regimes do not share their prior over feature distributions or over labeling mechanisms (\cref{prop:mixture-nonseparability}). Practically, at finite context sizes, the query can therefore update beliefs about the labeling mechanism, producing a \emph{predictive gap} between what the PFN community reports to target and the query-updated predictive shown to be the actual target  (\cref{prop:nonseparable-minimizer,eq:covariance-shift}). Under posterior concentration, this correction vanishes, and both predictives approach the same conditional distribution (\cref{prop:nonseparable-consistency}). Under misspecification, however, restrictions in the prior's support can make that common limit sacrifice label fit for feature fit (\cref{prop:misspecified-limit}), resulting in an \emph{optimality gap}. Both gaps can be seen as a \emph{separability gap}, causing (potentially arbitrarily large) differences between the predictives under non-separable priors. 

For PFN design, the prior's feature generator therefore affects the predictions themselves. A PFN pretrained on an SCM prior infers the labeling mechanism partly from the query's feature values, and it can use additional unlabeled rows the same way (as demonstrated in the following \cref{sec:semitabicl}). Whether this improves or degrades predictive performance on real data depends on whether the prior couples feature and label laws as the data do.

\section{Consequences of non-separable priors} 

We have shown that under a prior with coupled feature and label laws, the query can update the posterior over the labeling mechanism. This has many important consequences.

\begin{figure}
    \centering
    \includegraphics[width=\linewidth]{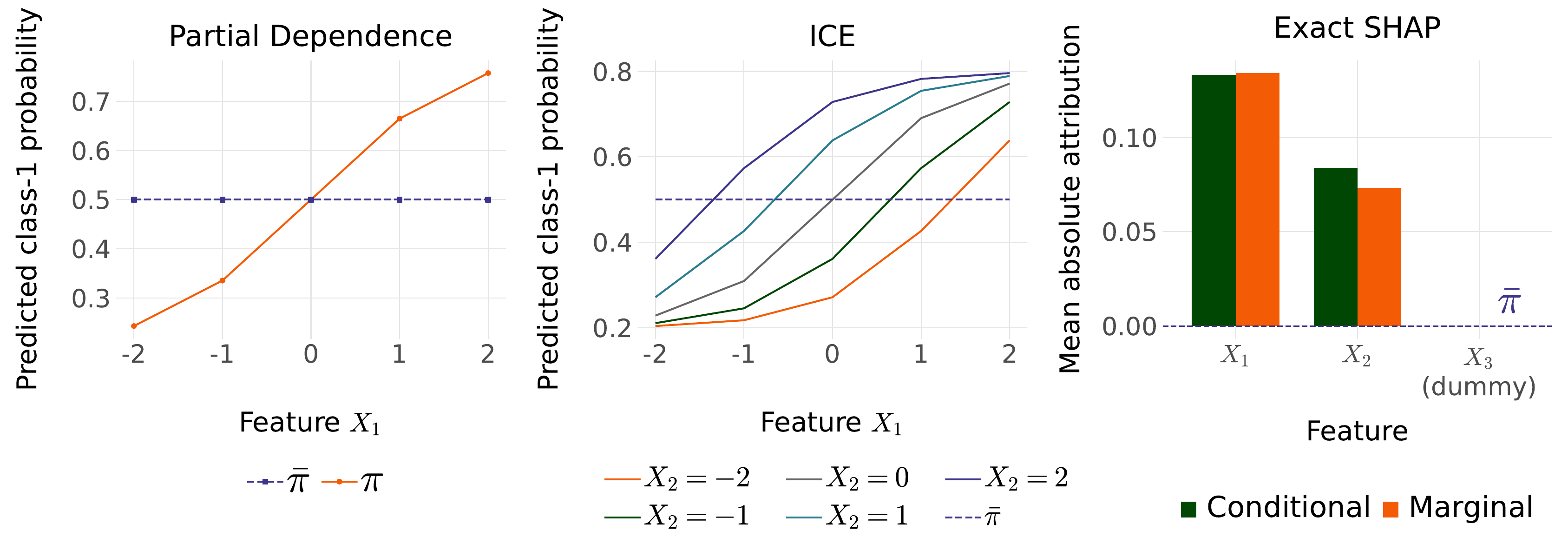}
 \caption{Feature effects under a prior in which every task has a constant class-1 probability $c_p$ that is associated with its feature distribution. There are two tasks, a low-$c_p$ and a high-$c_p$ one. $X_1$ and $X_2$ are discrete features shifted toward negative values in the former and toward positive values in the latter, so they indicate the task, while $X_3$ is independent noise with the same distribution in both tasks. Left: PD of $X_1$. Middle: ICE curves of $X_1$ for fixed $X_2$. Right: mean absolute exact SHAP attributions of $\pi$, conditional and marginal. Since no labeling mechanism depends on $\bm{x}$, $\bar\pi$ is constant and all its attributions are zero (dashed). $\pi$ assigns importance to $X_1$ and $X_2$ because they identify the task, and none to the dummy feature $X_3$.}
    \label{fig:interpretability}
\end{figure}

\paragraph{Interpretability methods}
Feature effect methods for model interpretability are also applicable to PFNs, where batched inference can bring down the cost of repeated model evaluations \citep{rundel2024interpretable}. Many interpretability methods, such as partial dependence (PD) and individual conditional expectation (ICE) plots vary selected features while averaging over, or fixing, the remaining coordinates~\citep{goldstein2015peeking}.  For a PFN approximating $\pi$, varying $\bm{x}$ changes both $p(y\mid\bm{x})$ and the posterior $\prior(\cdot\mid\bm{x},\dn)$ over tasks (\cref{eq:covariance-shift}). The resulting curve therefore combines variation within labeling mechanisms with evidence about which mechanism generated the query. For example, if every task has a constant class probability $c_p := p(1\mid\bm{x})$, but $c_p$ is associated with the task's feature distribution, then $\pi(1\mid\bm{x},\dn)$ can vary with $\bm{x}$ even though $\bar\pi(1\mid\bm{x},\dn)$ is constant, as illustrated in \Cref{fig:interpretability}. This effect can arise within the data support, in addition to the familiar extrapolation problem of PD plots~\citep{apley2020visualizing}. The marginalization convention of SHapley Additive exPlanations \citep[SHAP;][]{lundberg2017unified} does not remove it either. Conditional SHAP averages missing coordinates under their distribution given the retained coordinates, whereas marginal SHAP averages under their unconditional distribution~\citep{molnar2025}. Both still evaluate the same query-updated predictor at each completed input. Thus, these methods can faithfully explain the PFN's predictions while attributing importance to features that identify the task rather than change its conditional label distribution.

\paragraph{PFN for Bayesian optimization}

PFNs are increasingly used as surrogate models in Bayesian optimization \citep[BO;][]{muller2023pfns4bo,yu2026gitbo,rogers2026zeroshot}. BO needs the predictive distribution of the response $y$ at a candidate $\bm{x}$ chosen from previous evaluations $\dn$: $\pi_{\mathrm{BO}}(y\mid\bm{x},\dn)$. If each input is selected using the observed history and randomization independent of $p$, and an evaluation at $\bm{x}$ follows $p(y\mid\bm{x})$, this predictive is
\begin{align*}
    \pi_{\mathrm{BO}}(y\mid\bm{x},\dn)
    = \int p(y\mid\bm{x})\,d\prior_{\mathrm{BO}}(p\mid\dn), \quad
    \text{where} \;\; d\prior_{\mathrm{BO}}(p\mid\dn)
    \propto \prod_{i=1}^n p(y_i\mid\bm{x}_i)\,d\prior(p).
\end{align*}
Under separability, the feature likelihood $p_X(\bm{x})\prod_{i=1}^n p_X(\bm{x}_i)$ in the PFN posterior depends only on $p_X$ (see \cref{prop:nonseparable-minimizer} in \cref{app:proofs}). It cancels when we marginalize over the feature law, so $\pi=\pi_{\mathrm{BO}}$. Therefore, the PFN provides exactly the predictive needed for BO.

Under non-separability, the feature likelihood can also update the labeling mechanism. Standard PFN pretraining thus treats both the previous inputs and the candidate as task-generated evidence, even though the BO algorithm chose them. As a result, its target $\pi$ is usually different from $\pi_{\mathrm{BO}}$. Replacing $\pi$ by $\bar\pi$ removes the influence of the new candidate, but not the influence of the history. In either case, a PFN trained on a non-separable prior does not provide the predictive needed for BO.

\paragraph{Amortization gap}\label{consequence:amortizationGap}
Following \citet{hulskamp2026on}, the amortization gap of a PFN is its excess over the Bayes-optimal predictor: $\gap(\bm{x},\dn)=\KL\bigl(\ppd{\cdot\mid\bm{x},\dn}\,\Vert\,\pfn{\cdot\mid\bm{x},\dn}\bigr)$. Since $\pi$ minimizes the pretraining loss (\cref{prop:nonseparable-minimizer}), the expected gap equals the excess pretraining loss of the PFN (\cref{rem:amortization}). When analyzing how closely a given PFN approximates its target posterior predictive distribution, as, e.g., in \citet{muller_2022_TransformersCana,muller2024bayes} and \citet{hulskamp2026on}, it must be carefully evaluated which predictive from \cref{eq:ppd} is targeted precisely. A reference computed from the context posterior alone, such as an unweighted average over MCMC draws from $\prior(\cdot\mid\dn)$, estimates $\bar\pi$ rather than $\pi$. The discrepancy between this reference and the PFN decomposes into
\begin{align*}
     \pfn{y\mid\bm{x},\dn}-\ppdbar{y\mid\bm{x},\dn}
    &= \bigl(\pfn{y\mid\bm{x},\dn}-\ppd{y\mid\bm{x},\dn}\bigr) + \predgap,
\end{align*}
so any divergence computed against $\bar \pi$ is corrupted by the predictive gap of \cref{cor:covariance-shift}. In the example of \cref{fig:approximation_target_exp}, this overstates the gap of the standard PFN by a factor of about $95$. When $p_X$ can be evaluated, weighting each draw by the query's feature likelihood removes the bias and can provide the correct reference $\pi$. Under a separable prior, no correction is needed since $\pi=\bar\pi$.

\paragraph{Martingale checks}
A martingale check, as employed in the context of PFNs \citep{falck2024martingale,ng2026tabmgp,fong2023martingale}, asks whether generating additional observations $Y_{n+1}\sim\pi(\cdot\mid\bm{x}_{n+1},\dn)$ and updating on them (e.g., by updating the context $D_{n+1}=(\dn,(Y_{n+1},\bm{x}_{n+1}))$) leaves a prediction at a fixed query unchanged on average. Under a non-separable prior, the predictor treats externally generated features as evidence about the labeling mechanism, so even an exact Bayesian predictor can drift; i.e., in general it holds
\begin{equation}
    \E_{Y_{n+1}\sim\pi(\cdot\mid\bm{x}_{n+1},\dn)}
        \bigl[\pi(A\mid\bm{x},D_{n+1})\bigr]
        \ne \pi(A\mid\bm{x},\dn).
        \label{eq:martingale-check}
\end{equation}
The desired equality holds under separability. Without it, the appended feature provides evidence about the task, and the label sampler omits conditioning on the tracked query. Thus, drift can arise without PFN approximation error, although non-separability does not force drift at every query. For the underlying argument, refer to \cref{app:martingale}. Resampling features with a Bayesian bootstrap \citep{rubin1981bayesian,ng2026tabmgp} averages these drifts without guaranteeing that they cancel.

To illustrate this failure, we evaluate the exact predictives $\pi$ and $\bar\pi$ of the non-separable ordered-cluster prior (\cref{app:exp_ordered_cluster_prior}) on $2000$ contexts of size $n=2$, with eight queries each. Future features that are then appended to the context are fixed at the query (i.e., each new context row is a labeled copy of the query location) or at one context feature selected once per context-query pair. Labels are drawn sequentially from the respective predictive where the query stays fixed throughout, and each appended row updates the context through its joint likelihood. We measure cumulative expected drift as the expected class-1 prediction after the appended rows minus its initial value, holding the initial context, query, and repeated future feature fixed. For details, refer to \cref{app:martingale}. The results in \Cref{fig:martingale} show substantial drift in both predictives. This demonstrates that the exact target can drift under the diagnostic's sampling protocol, irrespective of the network's approximation error.

\begin{figure}
    \centering
    \begin{subfigure}[b]{0.48\textwidth}
        \centering
        \includegraphics[width=\textwidth]{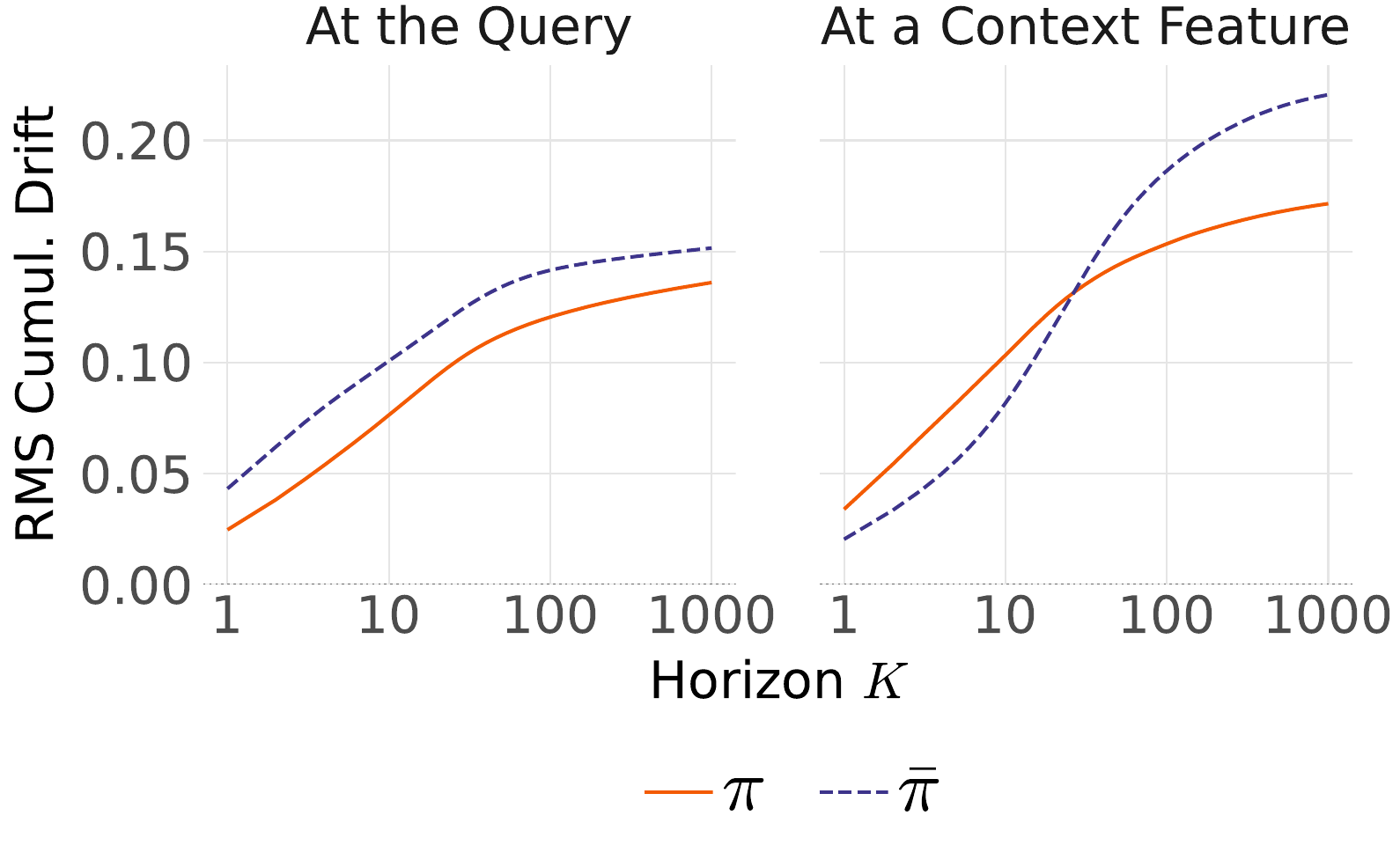}
    \end{subfigure}
    \hfill
    \begin{subfigure}[b]{0.48\textwidth}
        \centering
        \includegraphics[width=\textwidth]{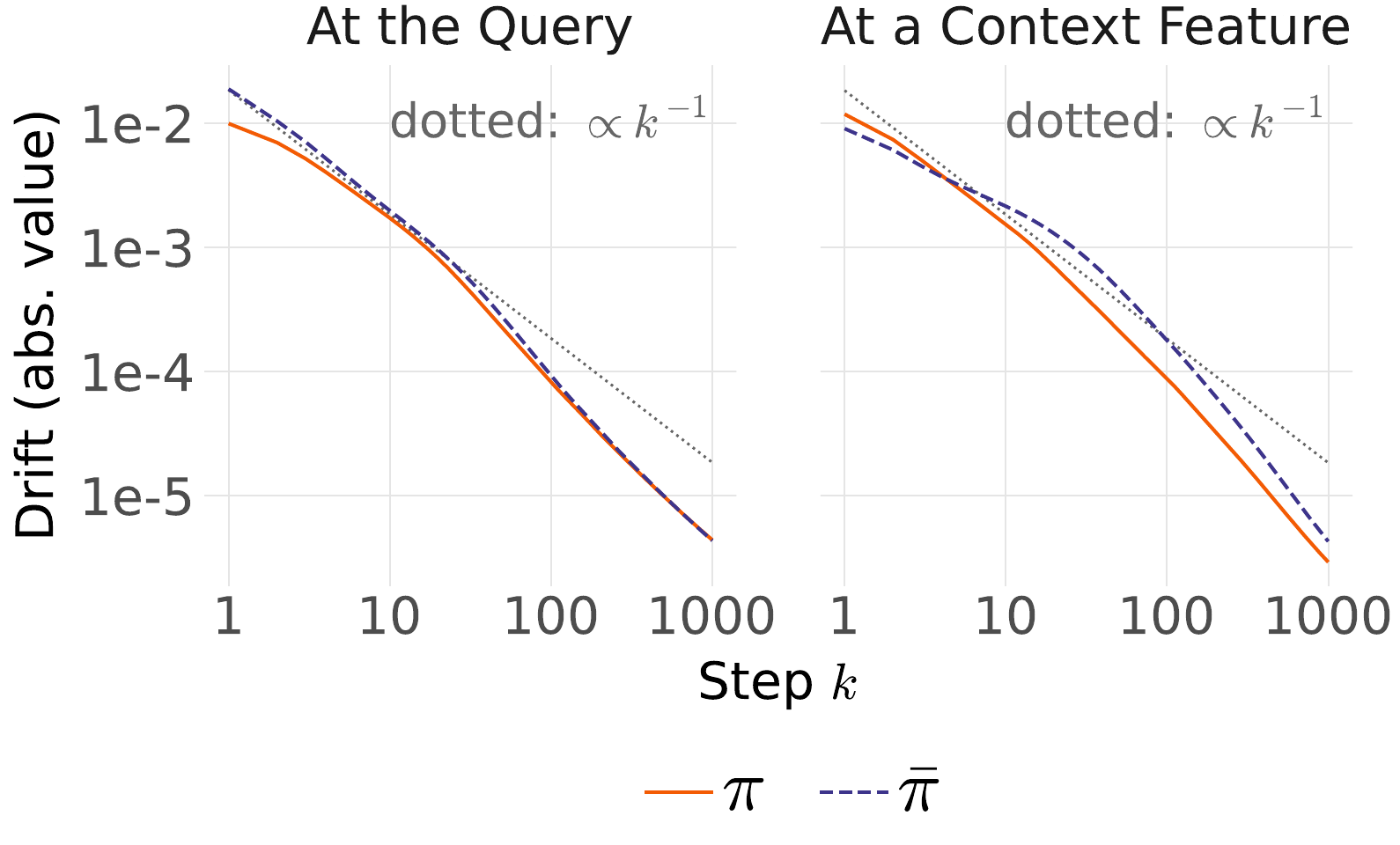}
    \end{subfigure}
    \caption{Non-separable priors can induce drift in exact Bayesian predictives. Curves show the RMS of the cumulative expected drift across $2000$ contexts of size $n=2$, with eight queries each, under the ordered-cluster prior. Every future feature is fixed at the query (left) or at one selected context feature (right), and labels are sampled from the predictive being evaluated with absolute conditional one-step drifts in the right figure. No trained network is involved.}
    \label{fig:martingale}
\end{figure}

\paragraph{PFNs as semi-supervised predictors}
\label{sec:semitabicl}
By \cref{cor:unlabeled-predictive}, PFNs pretrained on non-separable priors are already semi-supervised predictors: $\pi$ uses the unlabeled query $\bm{x}$ as evidence about the labeling mechanism. While they are currently limited to only use the additional evidence in the single query row, there is no structural reason not to integrate further unlabeled rows $\bm{V}$ into the prediction, turning PFNs into full semi-supervised predictors. \citet{schoelkopf2012causal} argue that this addition of unlabeled data should help especially in anti-causal and confounded tasks, which SCM priors already generate. Hence, integrating such unlabeled rows $\bm{V}$ into existing training pipelines and architectures should prove beneficial, which we show for both the ordered-cluster prior (see \cref{app:semi-ordered-cluster}) and the full-sized production prior of TabICLv2.

As such, we pretrained a novel 8.5M-parameter TabICL variant, \textit{SemiTabICL}, to not just use the unlabeled information in the query $\bm{x}$, but instead also explicitly allow for the incorporation of an additional set of unlabeled context rows $\bm{V}$ which are processed exactly as regular context rows, only with an additional learned embedding vector instead of the label embedding (see \cref{app:semitabicl}). Following \citet{guo2010extensive}, we evaluate SemiTabICL in the standard setting of semi-supervised learning with a low number of labeled training points relative to a large number of unlabeled training points. To this end, we reuse the TALENT benchmark \citep{liu2025talent} and convert it into a benchmark for semi-supervised classification, that we call \textit{semi-TALENT}, by splitting the official training fold of the classification datasets into labeled partitions of $n\in\{16,32,64,96\}$ rows and unlabeled partitions across multiple repetitions. We compare SemiTabICL to a baseline TabICL model pretrained without additional unlabeled context rows, as well as a self-training PFN variant where unlabeled rows predicted with high confidence are iteratively appended with their predicted labels before the prediction on the test set is made \citep{yarowsky1995unsupervised}.  

\Cref{fig:semiPFN-dose-response} shows the differences in accuracy for these two settings across a range of labeled training rows $n$ and the multiples thereof added as additional unlabeled rows $k$. SemiTabICL consistently improves upon the baseline by 0.4--1.1 percentage points, while the self-training hurts performance in almost every setting, increasingly so with more labeled rows. \Cref{app:semitabicl} contains further details on the evaluation as well as another comparison on the UCI datasets used in \citet{schoelkopf2012causal,guo2010extensive}. Importantly, when incorporating the capability to use additional unlabeled rows, it must be carefully specified whether these may coincide with the test rows, a point we will cover in the following paragraph.

\begin{figure}
    \centering
    \includegraphics[width=0.65\linewidth]{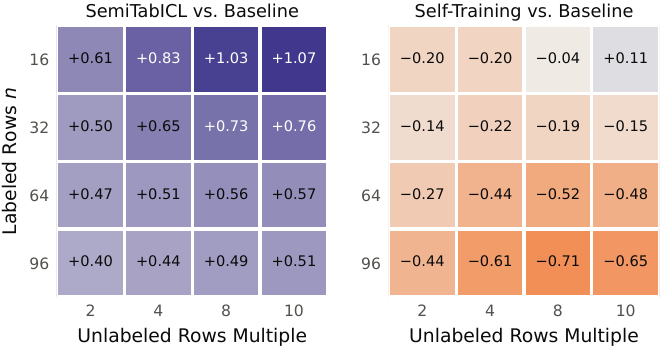}
    \caption{Accuracy gains (percentage points) for semi-supervised classification on the semi-TALENT benchmark for a range of labeled context dataset sizes ($n$) and multiples thereof of unlabeled context rows.  Left: SemiTabICL vs.\ identical baseline trained and evaluated without unlabeled context rows. Right: Semi-supervised extension of the baseline model via self-training vs.\ no additional self-training}
    \label{fig:semiPFN-dose-response}
\end{figure}

\paragraph{Evaluation protocols}
Once PFNs condition on unlabeled rows, the distinction between inductive and transductive inference matters for evaluation \citep{schoelkopf2012causal}. An inductive model predicts each test row on its own, whereas a transductive one predicts the test rows jointly, so the prediction for one row may depend on the others. Semi-supervised predictions induced by training on non-separable priors or explicitly for SemiTabICL stay inductive as long as unlabeled and other test rows are disjoint. Once they overlap, a method gains information that an inductive competitor does not have, which can lower the Bayes-optimal loss. Current benchmarks serve the test data in different ways, but none of them ensures that the prediction for one test row cannot use the others. No current tabular foundation model is designed to exploit this. Still, several pipelines already let test rows affect each other's predictions through shared preprocessing or compute optimizations (\cref{rem:transductive}). Benchmarks should therefore state whether transductive inference is permitted and, if it is not, enforce invariance to the other test rows throughout preprocessing and prediction.

\section{Discussion}
\label{sec:conclusion}

This article has highlighted a subtle but important discrepancy between the typical formulation of the posterior predictive distribution in the PFN literature and the actual target of PFN pretraining under non-separable priors. We further showed that this discrepancy has several practical implications for interpretability, Bayesian optimization, martingale checks, and semi-supervised learning.
Depending on the application, these effects can be unwanted or beneficial. We argue that the distinction between the query-updated predictive $\pi$ and the context-updated predictive $\bar\pi$ is important when designing PFN pretraining and evaluation protocols, especially in the presence of non-separable priors. Although this difference vanishes under the stated concentration conditions, coupling feature and label distributions can still sacrifice label fit for feature fit under misspecification. Our experiments confirm recovery of the query-updated target and show that explicitly incorporating additional unlabeled rows yields predictive gains with SemiTabICL on real tabular data.

Separable priors avoid the issues discussed in this work, but also forgo potential gains from using unlabeled features. Our analysis already points to useful design principles for such priors. SCM priors can remain the basis for generative modeling of tabular data, but two independent SCM mechanisms should be drawn, one for generating the features and one for generating the label. One could also pretrain on a range of coupling strengths between the mechanisms.
These findings make the dependence between feature and label laws a concrete design choice for PFNs and a promising direction for future work.

\paragraph{Limitations} The main limitation of this work is that we mainly establish that non-separability \emph{can} have the issues discussed in this work. We do not yet know whether production-grade PFNs benefit or suffer from it in practice, and to what extent. Studying this is challenging, since available PFNs use diverse pretraining priors and many are not open-source. Further, we can only compare to meaningful comparator models after full-scale retraining of the model under a separable sibling of its prior. We hope that our work motivates future research in this direction.

\subsection*{AI use statement}
In this work, we used generative AI tools to support the implementation of the presented methods in code and to draft and review mathematical proofs.
We have not used generative AI tools to formulate research hypotheses and develop the theoretical and conceptual framework. Furthermore, other required disclosures such as data cleaning or data analysis are not applicable to this work.
Additionally, we used generative AI tools to assist in the writing and presentation of this manuscript by improving readability and the presentation of the included figures. Furthermore, generative AI tools were used to explore and identify relevant literature in addition to our initial manual survey. We have reviewed all AI-assisted work. AI-generated code was reviewed manually as well as tested in detail. All modifications of the manuscript suggested by AI tools were carefully inspected. We take responsibility for the final content of this work, including text, claims or artifacts produced with the aid of generative AI.

\subsubsection*{Acknowledgments}
This work was supported by the Bavarian AI Foundation Model Initiative (AI-Bay), funded by the Bavarian State Ministry of Science and the Arts. The authors gratefully acknowledge the computational resources and support provided by the Erlangen National High Performance Computing Center (NHR@FAU) at Friedrich-Alexander-Universität Erlangen-Nürnberg (FAU).

\bibliography{references}

\begin{thebibliography}{47}
\providecommand{\natexlab}[1]{#1}
\providecommand{\url}[1]{\texttt{#1}}
\expandafter\ifx\csname urlstyle\endcsname\relax
  \providecommand{\doi}[1]{doi: #1}\else
  \providecommand{\doi}{doi: \begingroup \urlstyle{rm}\Url}\fi

\bibitem[Apley \& Zhu(2020)Apley and Zhu]{apley2020visualizing}
Daniel~W Apley and Jingyu Zhu.
\newblock Visualizing the effects of predictor variables in black box supervised learning models.
\newblock \emph{Journal of the Royal Statistical Society Series B: Statistical Methodology}, 82\penalty0 (4), 2020.

\bibitem[Cozman et~al.(2003)Cozman, Cohen, and Cirelo]{cozman2003semi}
Fabio~G Cozman, Ira Cohen, and Marcelo~C Cirelo.
\newblock {Semi-supervised learning of mixture models}.
\newblock In \emph{Proceedings of the 20th international conference on machine learning}, 2003.

\bibitem[De~Blasi \& Walker(2013)De~Blasi and Walker]{deblasiwalker2013}
Pierpaolo De~Blasi and Stephen~G. Walker.
\newblock Bayesian asymptotics with misspecified models.
\newblock \emph{Statistica Sinica}, 23\penalty0 (1):\penalty0 169--187, 2013.
\newblock \doi{10.5705/ss.2010.239}.

\bibitem[Erickson et~al.(2026)Erickson, Purucker, Tschalzev, Holzm{\"u}ller, Desai, Salinas, and Hutter]{erickson2026tabarena}
Nick Erickson, Lennart Purucker, Andrej Tschalzev, David Holzm{\"u}ller, Prateek Desai, David Salinas, and Frank Hutter.
\newblock {TabArena: A living benchmark for machine learning on tabular data}.
\newblock \emph{Advances in Neural Information Processing Systems}, 2026.

\bibitem[Falck et~al.(2024)Falck, Wang, and Holmes]{falck2024martingale}
Fabian Falck, Ziyu Wang, and Christopher~C. Holmes.
\newblock Is in-context learning in large language models bayesian? {A} martingale perspective.
\newblock In \emph{Proceedings of the 41st International Conference on Machine Learning}, Proceedings of Machine Learning Research. PMLR, 2024.

\bibitem[Fong et~al.(2023)Fong, Holmes, and Walker]{fong2023martingale}
Edwin Fong, Chris Holmes, and Stephen~G Walker.
\newblock Martingale posterior distributions.
\newblock \emph{Journal of the Royal Statistical Society Series B: Statistical Methodology}, 85\penalty0 (5):\penalty0 1357--1391, 2023.

\bibitem[Foong et~al.(2020)Foong, Bruinsma, Gordon, Dubois, Requeima, and Turner]{foong2020meta}
Andrew Foong, Wessel Bruinsma, Jonathan Gordon, Yann Dubois, James Requeima, and Richard Turner.
\newblock {Meta-learning stationary stochastic process prediction with convolutional neural processes}.
\newblock \emph{Advances in Neural Information Processing Systems}, 2020.

\bibitem[Garnelo et~al.(2018{\natexlab{a}})Garnelo, Rosenbaum, Maddison, Ramalho, Saxton, Shanahan, Teh, Rezende, and Eslami]{garnelo18a}
Marta Garnelo, Dan Rosenbaum, Christopher Maddison, Tiago Ramalho, David Saxton, Murray Shanahan, Yee~Whye Teh, Danilo Rezende, and S.~M.~Ali Eslami.
\newblock Conditional neural processes.
\newblock In \emph{Proceedings of the 35th International Conference on Machine Learning}. PMLR, 2018{\natexlab{a}}.

\bibitem[Garnelo et~al.(2018{\natexlab{b}})Garnelo, Schwarz, Rosenbaum, Viola, Rezende, Eslami, and Teh]{garnelo2018neural}
Marta Garnelo, Jonathan Schwarz, Dan Rosenbaum, Fabio Viola, Danilo~J Rezende, SM~Eslami, and Yee~Whye Teh.
\newblock {Neural processes}.
\newblock \emph{arXiv preprint arXiv:1807.01622}, 2018{\natexlab{b}}.

\bibitem[Goldstein et~al.(2015)Goldstein, Kapelner, Bleich, and Pitkin]{goldstein2015peeking}
Alex Goldstein, Adam Kapelner, Justin Bleich, and Emil Pitkin.
\newblock {Peeking inside the black box: Visualizing statistical learning with plots of individual conditional expectation}.
\newblock \emph{Journal of Computational and Graphical Statistics}, 24\penalty0 (1), 2015.

\bibitem[Grinsztajn et~al.(2025)Grinsztajn, Flöge, Key, Birkel, Jund, Roof, Jäger, Safaric, Alessi, Hayler, Manium, Yu, Jablonski, Hoo, Garg, Robertson, Bühler, Moroshan, Purucker, Cornu, Wehrhahn, Bonetto, schoelkopf, Gambhir, Hollmann, and Hutter]{grinsztajn2025tabpfnv25}
Léo Grinsztajn, Klemens Flöge, Oscar Key, Felix Birkel, Philipp Jund, Brendan Roof, Benjamin Jäger, Dominik Safaric, Simone Alessi, Adrian Hayler, Mihir Manium, Rosen Yu, Felix Jablonski, Shi~Bin Hoo, Anurag Garg, Jake Robertson, Magnus Bühler, Vladyslav Moroshan, Lennart Purucker, Clara Cornu, Lilly~Charlotte Wehrhahn, Alessandro Bonetto, Bernhard schoelkopf, Sauraj Gambhir, Noah Hollmann, and Frank Hutter.
\newblock {TabPFN-2.5: Advancing the state of the art in tabular foundation models}, 2025.

\bibitem[Grinsztajn et~al.(2026)Grinsztajn, Flöge, Key, Birkel, Jund, Roof, Manium, Hoo, Bühler, Garg, Safaric, Robertson, Jäger, Alessi, Hayler, Moroshan, Purucker, Singer, Arazi, Siems, Metzen, Grab, Erickson, Guo, Kalfon, Bing, Salinas, Cornu, Wehrhahn, Kriuchkova, Kaya, Sidhoum, Salmon, Chen, Hulsebos, LeCun, Müller, schoelkopf, Gambhir, Hollmann, and Hutter]{grinsztajn2026tabpfnv3}
Léo Grinsztajn, Klemens Flöge, Oscar Key, Felix Birkel, Philipp Jund, Brendan Roof, Mihir Manium, Shi~Bin Hoo, Magnus Bühler, Anurag Garg, Dominik Safaric, Jake Robertson, Benjamin Jäger, Simone Alessi, Adrian Hayler, Vladyslav Moroshan, Lennart Purucker, Philipp Singer, Alan Arazi, Julien Siems, Jan~Hendrik Metzen, Georg Grab, Nick Erickson, Siyuan Guo, Eliott Kalfon, Simon Bing, David Salinas, Clara Cornu, Lilly~Charlotte Wehrhahn, Diana Kriuchkova, Kursat Kaya, Lydia Sidhoum, Marie Salmon, Jerry Chen, Madelon Hulsebos, Yann LeCun, Samuel Müller, Bernhard schoelkopf, Sauraj Gambhir, Noah Hollmann, and Frank Hutter.
\newblock {TabPFN-3: Technical report}, 2026.

\bibitem[Guo et~al.(2010)Guo, Niu, and Zhang]{guo2010extensive}
Yuanyuan Guo, Xiaoda Niu, and Harry Zhang.
\newblock {An extensive empirical study on semi-supervised learning}.
\newblock In \emph{2010 IEEE International Conference on Data Mining}. IEEE, 2010.

\bibitem[Hollmann et~al.(2023)Hollmann, M{\"u}ller, Eggensperger, and Hutter]{hollmanntabpfn}
Noah Hollmann, Samuel M{\"u}ller, Katharina Eggensperger, and Frank Hutter.
\newblock {TabPFN: A transformer that solves small tabular classification problems in a second}.
\newblock In \emph{The Eleventh International Conference on Learning Representations}, 2023.

\bibitem[Hollmann et~al.(2025)Hollmann, M{\"u}ller, Purucker, Krishnakumar, K{\"o}rfer, Hoo, Schirrmeister, and Hutter]{hollmann2025tabpfnv2}
Noah Hollmann, Samuel M{\"u}ller, Lennart Purucker, Arjun Krishnakumar, Max K{\"o}rfer, Shi~Bin Hoo, Robin~Tibor Schirrmeister, and Frank Hutter.
\newblock Accurate predictions on small data with a tabular foundation model.
\newblock \emph{Nature}, 01 2025.
\newblock \doi{10.1038/s41586-024-08328-6}.

\bibitem[H{\"u}lskamp et~al.(2026)H{\"u}lskamp, Kobialka, Sommer, and R{\"u}gamer]{hulskamp2026on}
Manuel H{\"u}lskamp, Julius Kobialka, Emanuel Sommer, and David R{\"u}gamer.
\newblock On the uncertainty in prior-data fitted network pretraining.
\newblock In \emph{2nd ICML Workshop on Foundation Models for Structured Data}, 2026.

\bibitem[Janzing et~al.(2020)Janzing, Minorics, and Bl{\"o}baum]{janzing2020feature}
Dominik Janzing, Lenon Minorics, and Patrick Bl{\"o}baum.
\newblock {Feature relevance quantification in explainable AI: A causal problem}.
\newblock In \emph{International Conference on artificial intelligence and statistics}. PMLR, 2020.

\bibitem[Jordan et~al.(2024)Jordan, Jin, Boza, You, Cesista, Newhouse, and Bernstein]{jordan2024muon}
Keller Jordan, Yuchen Jin, Vlado Boza, Jiacheng You, Franz Cesista, Laker Newhouse, and Jeremy Bernstein.
\newblock {Muon: An optimizer for hidden layers in neural networks}, 2024.
\newblock URL \url{https://kellerjordan.github.io/posts/muon/}.

\bibitem[Kim et~al.(2019)Kim, Mnih, Schwarz, Garnelo, Eslami, Rosenbaum, Vinyals, and Teh]{kim2018attentive}
Hyunjik Kim, Andriy Mnih, Jonathan Schwarz, Marta Garnelo, Ali Eslami, Dan Rosenbaum, Oriol Vinyals, and Yee~Whye Teh.
\newblock Attentive neural processes.
\newblock In \emph{International Conference on Learning Representations}, 2019.

\bibitem[Landsgesell et~al.(2026)Landsgesell, Knoll, and Wenzel]{landsgesell2026scoringbench}
Jonas Landsgesell, Pascal Knoll, and Tizian Wenzel.
\newblock {ScoringBench: A benchmark for evaluating tabular foundation models with proper scoring rules}.
\newblock \emph{arXiv preprint arXiv:2603.29928}, 2026.

\bibitem[Li et~al.(2026)Li, Chang, Kara, Liu, Roy-Chowdhury, and Oymak]{li2026and}
Yingcong Li, Xiangyu Chang, Muti Kara, Xiaofeng Liu, Amit Roy-Chowdhury, and Samet Oymak.
\newblock {When and how unlabeled data provably improve in-context learning}.
\newblock \emph{Advances in Neural Information Processing Systems}, 2026.

\bibitem[Liang \& Jordan(2008)Liang and Jordan]{liang2008asymptotic}
Percy Liang and Michael~I Jordan.
\newblock {An asymptotic analysis of generative, discriminative, and pseudolikelihood estimators}.
\newblock In \emph{Proceedings of the 25th international conference on Machine learning}, 2008.

\bibitem[Liu et~al.(2025)Liu, Cai, Zhou, Yin, Zhou, Jiang, and Ye]{liu2025talent}
Si-Yang Liu, Hao-Run Cai, Qi-Le Zhou, Huai-Hong Yin, Tao Zhou, Jun-Peng Jiang, and Han-Jia Ye.
\newblock {TALENT: A tabular analytics and learning toolbox}.
\newblock \emph{Journal of Machine Learning Research}, 26, 2025.

\bibitem[Lundberg \& Lee(2017)Lundberg and Lee]{lundberg2017unified}
Scott~M Lundberg and Su-In Lee.
\newblock {A unified approach to interpreting model predictions}.
\newblock \emph{Advances in neural information processing systems}, 2017.

\bibitem[Molnar(2025)]{molnar2025}
Christoph Molnar.
\newblock \emph{{Interpretable Machine Learning}}.
\newblock 3 edition, 2025.
\newblock ISBN 978-3-911578-03-5.
\newblock URL \url{https://christophm.github.io/interpretable-ml-book}.

\bibitem[M{\"u}ller et~al.(2022)M{\"u}ller, Hollmann, Arango, Grabocka, and Hutter]{muller_2022_TransformersCana}
Samuel M{\"u}ller, Noah Hollmann, Sebastian~Pineda Arango, Josif Grabocka, and Frank Hutter.
\newblock Transformers {{Can Do Bayesian Inference}}, 2022.

\bibitem[M{\"u}ller et~al.(2023)M{\"u}ller, Feurer, Hollmann, and Hutter]{muller2023pfns4bo}
Samuel M{\"u}ller, Matthias Feurer, Noah Hollmann, and Frank Hutter.
\newblock {Pfns4bo: In-context learning for bayesian optimization}.
\newblock In \emph{International Conference on Machine Learning}. PMLR, 2023.

\bibitem[M{\"u}ller et~al.(2024)M{\"u}ller, Hollmann, and Hutter]{muller2024bayes}
Samuel M{\"u}ller, Noah Hollmann, and Frank Hutter.
\newblock {Bayes' power for explaining in-context learning generalizations}.
\newblock \emph{arXiv preprint arXiv:2410.01565}, 2024.

\bibitem[Nagler(2023)]{nagler23statistical}
Thomas Nagler.
\newblock {Statistical Foundations of Prior-Data Fitted Networks}.
\newblock In \emph{Proceedings of the 40th International Conference on Machine Learning}, 2023.

\bibitem[Ng et~al.(2026)Ng, Fong, Frazier, Knoblauch, and Wei]{ng2026tabmgp}
Kenyon Ng, Edwin Fong, David~Tyler Frazier, Jeremias Knoblauch, and Susan Wei.
\newblock {Tab{MGP}: Martingale Posterior with Tab{PFN}}.
\newblock In \emph{Forty-third International Conference on Machine Learning}, 2026.

\bibitem[Nguyen \& Grover(2022)Nguyen and Grover]{nguyen2022tnp}
Tung Nguyen and Aditya Grover.
\newblock {Transformer Neural Processes: Uncertainty-Aware Meta Learning Via Sequence Modeling}.
\newblock In \emph{Proceedings of the 39th International Conference on Machine Learning}, Proceedings of Machine Learning Research. PMLR, 2022.

\bibitem[Paszke et~al.(2019)Paszke, Gross, Massa, Lerer, Bradbury, Chanan, Killeen, Lin, Gimelshein, Antiga, Desmaison, Kopf, Yang, DeVito, Raison, Tejani, Chilamkurthy, Steiner, Fang, Bai, and Chintala]{paszke_2019_PyTorchImperative}
Adam Paszke, Sam Gross, Francisco Massa, Adam Lerer, James Bradbury, Gregory Chanan, Trevor Killeen, Zeming Lin, Natalia Gimelshein, Luca Antiga, Alban Desmaison, Andreas Kopf, Edward Yang, Zachary DeVito, Martin Raison, Alykhan Tejani, Sasank Chilamkurthy, Benoit Steiner, Lu~Fang, Junjie Bai, and Soumith Chintala.
\newblock {{PyTorch}}: {{An Imperative Style}}, {{High-Performance Deep Learning Library}}.
\newblock In \emph{Proceedings of the 33rd {{Conference}} on {{Neural Information Processing Systems}}}, 2019.

\bibitem[Pedregosa et~al.(2011)Pedregosa, Varoquaux, Gramfort, Michel, Thirion, Grisel, Blondel, Prettenhofer, Weiss, Dubourg, et~al.]{pedregosa2011scikit}
Fabian Pedregosa, Ga{\"e}l Varoquaux, Alexandre Gramfort, Vincent Michel, Bertrand Thirion, Olivier Grisel, Mathieu Blondel, Peter Prettenhofer, Ron Weiss, Vincent Dubourg, et~al.
\newblock Scikit-learn: Machine learning in python.
\newblock \emph{Journal of Machine Learning Research}, 12, 2011.

\bibitem[Purucker et~al.(2026)Purucker, Tschalzev, Erickson, Blayer, Holzmüller, Arazi, Pfefferle, Tajjar, Varoquaux, and Hutter]{purucker2026beyond}
Lennart Purucker, Andrej Tschalzev, Nick Erickson, Gioia Blayer, David Holzmüller, Alan Arazi, Alexander Pfefferle, Mustafa Tajjar, Gaël Varoquaux, and Frank Hutter.
\newblock Beyond {IID}: How general are tabular foundation models, really?
\newblock \emph{arXiv preprint arXiv:2606.30410}, 2026.

\bibitem[Qu et~al.(2025)Qu, Holzm{\"u}ller, Varoquaux, and Le~Morvan]{qu2025tabicl}
Jingang Qu, David Holzm{\"u}ller, Ga{\"e}l Varoquaux, and Marine Le~Morvan.
\newblock Tab{ICL}: {A} tabular foundation model for in-context learning on large data.
\newblock In \emph{International Conference on Machine Learning}, 2025.

\bibitem[Qu et~al.(2026)Qu, Holzm{\"u}ller, Varoquaux, and Le~Morvan]{qu2026tabiclv2}
Jingang Qu, David Holzm{\"u}ller, Ga{\"e}l Varoquaux, and Marine Le~Morvan.
\newblock {TabICLv2}: {A} better, faster, scalable, and open tabular foundation model.
\newblock In \emph{International Conference on Machine Learning}, 2026.

\bibitem[Rogers \& Ponnada(2026)Rogers and Ponnada]{rogers2026zeroshot}
Theodore Rogers and Srividya Ponnada.
\newblock Zero-shot bayesian optimization with {TabPFN}: Competitive with state-of-the-art without per-task training.
\newblock In \emph{AutoML 2026 Methods Track}, 2026.

\bibitem[Rubin(1981)]{rubin1981bayesian}
Donald~B Rubin.
\newblock {The bayesian bootstrap}.
\newblock \emph{The annals of statistics}, pp.\  130--134, 1981.

\bibitem[Rundel et~al.(2024)Rundel, Kobialka, von Crailsheim, Feurer, Nagler, and R{\"u}gamer]{rundel2024interpretable}
David Rundel, Julius Kobialka, Constantin von Crailsheim, Matthias Feurer, Thomas Nagler, and David R{\"u}gamer.
\newblock {Interpretable machine learning for TabPFN}.
\newblock In \emph{World Conference on Explainable Artificial Intelligence}, 2024.

\bibitem[Sch{\"o}lkopf et~al.(2012)Sch{\"o}lkopf, Janzing, Peters, Sgouritsa, Zhang, and Mooij]{schoelkopf2012causal}
Bernhard Sch{\"o}lkopf, Dominik Janzing, Jonas Peters, Eleni Sgouritsa, Kun Zhang, and Joris Mooij.
\newblock {On Causal and Anticausal Learning}.
\newblock In \emph{Proceedings of the 29th International Coference on International Conference on Machine Learning}, 2012.

\bibitem[Somepalli et~al.(2021)Somepalli, Goldblum, Schwarzschild, Bruss, and Goldstein]{somepalli2021saint}
Gowthami Somepalli, Micah Goldblum, Avi Schwarzschild, C~Bayan Bruss, and Tom Goldstein.
\newblock {Saint: Improved neural networks for tabular data via row attention and contrastive pre-training}.
\newblock \emph{arXiv preprint arXiv:2106.01342}, 2021.

\bibitem[Swelam et~al.(2025)Swelam, Purucker, Robertson, Raum, Boedecker, and Hutter]{swelam2025does}
Omar Swelam, Lennart Purucker, Jake Robertson, Hanne Raum, Joschka Boedecker, and Frank Hutter.
\newblock Does {TabPFN} understand causal structures?
\newblock In \emph{EurIPS 2025 Workshop: AI for Tabular Data}, 2025.

\bibitem[Tao et~al.(2026)Tao, Zhang, Liu, Han, Maddix, Fang, Han, Gai, Liu, Bohlke-Schneider, Wang, Friedland, Mah, Lee, and Kong]{tao2026mitrav2}
Yefan Tao, Xiyuan Zhang, Xinyi Liu, Boran Han, Danielle Maddix, Haoyang Fang, Zhen Han, Jiading Gai, Xuanqing Liu, Michael Bohlke-Schneider, Yuyang Wang, Gerald Friedland, Kevan Mah, Chris Lee, and Chris Kong.
\newblock {Mitra-v2 technical report}, 2026.

\bibitem[Yarowsky(1995)]{yarowsky1995unsupervised}
David Yarowsky.
\newblock Unsupervised word sense disambiguation rivaling supervised methods.
\newblock In \emph{33rd annual meeting of the association for computational linguistics}, 1995.

\bibitem[Yu et~al.(2026)Yu, Picard, and Ahmed]{yu2026gitbo}
Rosen Yu, Cyril Picard, and Faez Ahmed.
\newblock {GIT-BO}: High-dimensional bayesian optimization with tabular foundation models.
\newblock In \emph{International Conference on Learning Representations}, 2026.

\bibitem[Zhang et~al.(2025{\natexlab{a}})Zhang, Ren, Yu, Yuan, Wang, Li, Wu, Mo, Mao, Hao, et~al.]{zhang2025limix}
Xingxuan Zhang, Gang Ren, Han Yu, Hao Yuan, Hui Wang, Jiansheng Li, Jiayun Wu, Lang Mo, Li~Mao, Mingchao Hao, et~al.
\newblock {Limix: Unleashing structured-data modeling capability for generalist intelligence}.
\newblock \emph{arXiv preprint arXiv:2509.03505}, 2025{\natexlab{a}}.

\bibitem[Zhang et~al.(2025{\natexlab{b}})Zhang, Maddix, Yin, Erickson, Ansari, Han, Zhang, Akoglu, Faloutsos, Mahoney, Hu, Rangwala, Karypis, and Wang]{zhang2025mitra}
Xiyuan Zhang, Danielle~C. Maddix, Junming Yin, Nick Erickson, Abdul~Fatir Ansari, Boran Han, Shuai Zhang, Leman Akoglu, Christos Faloutsos, Michael~W. Mahoney, Cuixiong Hu, Huzefa Rangwala, George Karypis, and Bernie Wang.
\newblock Mitra: Mixed synthetic priors for enhancing tabular foundation models.
\newblock In \emph{Advances in Neural Information Processing Systems}, 2025{\natexlab{b}}.

\end{thebibliography}
\bibliographystyle{iclr2027_conference}

\clearpage
\appendix
\crefalias{section}{appendix}
\crefalias{subsection}{appendix}
\crefalias{subsubsection}{appendix}
\counterwithin{proposition}{section}
\counterwithin{corollary}{section}
\counterwithin{remark}{section}
\counterwithin{definition}{section}

\section{Further theoretical results and proofs}
\label{app:proofs}

\subsection{Bayes-optimal predictive} \label{app:bop}

\begin{proposition}
\label{prop:nonseparable-minimizer}
Under our data model, the expected log loss over all conditional predictors
$q(y\mid\bm{x},\dn)$ is minimized, uniquely up to null sets of
$(\bm{X},\Dn)$, by the conditional distribution of $Y$ given
$(\bm{X},\Dn)$. Its density
$\ppd{y\mid\bm{x},\dn}$ averages $\task{y\mid\bm{x}}$
under the task posterior
\begin{equation}
    d\prior(p\mid\bm{x},\dn)
    \propto
    p_X(\bm{x})
    \prod_{i=1}^n p_X(\bm{x}_i)\,\task{y_i\mid\bm{x}_i}\,d\prior(p).
    \label{eq:nonseparable-ppd}
\end{equation}
This is equivalent to $\pi$ in~\cref{eq:ppd}.
Under separability, the marginal posterior of $p_{Y\mid X}$ is proportional to $\prod_{i=1}^n
    p_{Y\mid X}(y_i\mid\bm{x}_i)\,  d\prior_{Y\mid X}(p_{Y\mid X})$, and hence does not depend on the query feature $\bm{x}$. Therefore, the predictive equals
$\ppdbar{y\mid\bm{x},\dn}$ wherever the posteriors are defined.
\end{proposition}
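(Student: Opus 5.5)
The proof has three parts: the log-loss minimizer, the task posterior it induces, and the cancellation under separability.

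\emph{Log-loss minimizer.} Write the expected log loss as
\[
\E\bigl[-\log q(Y\mid\bm{X},\Dn)\bigr]
= \E_{(\bm{X},\Dn)}\Bigl[\E\bigl[-\log q(Y\mid\bm{X},\Dn)\,\big|\,\bm{X},\Dn\bigr]\Bigr],
\]
where the expectation is over the joint law of task, context and query induced by $\prior$. Let $r(\cdot\mid\bm{x},\dn)$ be a regular conditional distribution of $Y$ given $(\bm{X},\Dn)$. The inner expectation equals the conditional entropy of $r$ plus $\KL\bigl(r(\cdot\mid\bm{X},\Dn)\,\Vert\,q(\cdot\mid\bm{X},\Dn)\bigr)$. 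Gibbs' inequality then shows the loss is minimized exactly when this KL term vanishes, i.e.\ $q=r$ almost surely in $(\bm{X},\Dn)$. This gives uniqueness up to null sets, under the standing assumption that the minimal loss (the conditional entropy) is finite.

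\emph{Form of the conditional density.} By the data model, the joint density of $(\Dn,\bm{X},Y)$ given $p$ is
\[
p_X(\bm{x})\,\task{y\mid\bm{x}}\prod_{i=1}^n p_X(\bm{x}_i)\,\task{y_i\mid\bm{x}_i}.
\]
Integrating against $d\prior(p)$ gives the marginal joint density. Dividing by its integral over $y$ gives, by Bayes' rule and Fubini,
\[
r(y\mid\bm{x},\dn)=\int \task{y\mid\bm{x}}\,d\prior(p\mid\bm{x},\dn),
\]
with the posterior in \eqref{eq:nonseparable-ppd}. This holds wherever the normalizing constant is positive and finite, which is true almost surely. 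Since the posterior in \eqref{eq:nonseparable-ppd} is the posterior given both $\dn$ and the query feature $\bm{x}$, this is exactly $\pi$ in \cref{eq:ppd}. The main technical care is here: one needs a dominating measure for $(y,\bm{x})$ across tasks, so that the densities are jointly measurable in $(p,y,\bm{x})$ and Fubini applies. I will assume this as part of the data model.

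\emph{Separability.} Under \cref{def:separable}, the unnormalized posterior in \eqref{eq:nonseparable-ppd} factorizes as
\[
\Bigl[p_X(\bm{x})\prod_{i=1}^n p_X(\bm{x}_i)\,d\prior_X(p_X)\Bigr]\cdot\Bigl[\prod_{i=1}^n p_{Y\mid X}(y_i\mid\bm{x}_i)\,d\prior_{Y\mid X}(p_{Y\mid X})\Bigr].
\]
The posterior is therefore a product measure, and its $p_{Y\mid X}$-marginal is proportional to the second factor. That factor is free of $\bm{x}$. The same factorization without the term $p_X(\bm{x})$ gives the marginal of $\prior(\cdot\mid\dn)$, and it has the same second factor. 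The integrand $\task{y\mid\bm{x}}$ depends on $p$ only through $p_{Y\mid X}$, so both predictives integrate it against the same marginal. Hence $\pi=\bar\pi$ wherever both normalizers are positive and finite.
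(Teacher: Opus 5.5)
Your proposal is correct and follows essentially the same route as the paper. The excess log loss is written as the expected KL divergence from the true conditional of $Y$ given $(\bm{X},D_n)$; Bayes' rule applied to the conditionally iid product density gives the query-updated posterior; and under separability the feature-likelihood factor depends only on $p_X$, so it cancels from the $p_{Y\mid X}$-marginal. You are more explicit than the paper in two places: the measure-theoretic conditions (finite minimal loss, a common dominating measure for Fubini, positive and finite normalizers), and the check that the context-only posterior has the same $p_{Y\mid X}$-marginal. These are refinements, not a different argument.
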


\begin{proof}
Let $\featurecollectionandD=(\featurecollection,\Dn)$, where
$\featurecollection=(\bm{X}_1,\dots,\bm{X}_n,\bm{X})$ and $\responsecollection=(Y_1,\ldots,Y_n,Y)$, with realizations $\featurevalues$ and $\responsevalues=(y_1,\ldots,y_n,y)$.
For any conditional distribution $q$,
\begin{equation*}
    \E[-\log q(Y\mid\featurecollectionandD)]
    -
    \E[-\log \pi(Y\mid\featurecollectionandD)]
    =
    \E\!\left[
        \KL\!\left(
            \pi(\cdot\mid\featurecollectionandD)
            \,\Vert\,
            q(\cdot\mid\featurecollectionandD)
        \right)
    \right]
    \ge 0,
\end{equation*}
where $\pi(\cdot\mid\featurecollectionandD)$ is the conditional distribution
of $Y$ given $\featurecollectionandD$. Equality holds if and only if
$q(\cdot\mid\featurecollectionandD)
=\pi(\cdot\mid\featurecollectionandD)$ almost surely, proving uniqueness
up to null sets of the inputs.

By the independence of the rows given the task $p$, the joint density at $(\featurevalues,\responsevalues)$,
conditional on $p$, is
\[
    p_X(\bm{x})\,p(y\mid\bm{x})
    \prod_{i=1}^n p_X(\bm{x}_i)\,p(y_i\mid\bm{x}_i).
\]
Hence, conditioning on
$\bm{X}_i=\bm{x}_i$, $\bm{X}=\bm{x}$, and
$Y_i=y_i$, Bayes' rule gives
\[
    d\prior(p\mid\bm{x},\dn)
    \propto
    p_X(\bm{x})
    \prod_{i=1}^n p_X(\bm{x}_i)\,\task{y_i\mid\bm{x}_i}\,
    d\prior(p),
\]
which is~\cref{eq:nonseparable-ppd}. Moreover,
\[
    \pi(y\mid\bm{x},\dn)
    =
    \int
    \task{y\mid\bm{x}}\,
    d\prior(p\mid\bm{x},\dn),
\]
since, conditional on the task $p$ and $\featurecollection$, the distribution of $Y$
is $\task{\cdot\mid\bm{X}}$.

Under separability, the prior factorizes as
\[
    d\prior(p)
    =d\prior_X(p_X)\,d\prior_{Y\mid X}(p_{Y\mid X}).
\]
Thus, marginalizing the task posterior over $p_X$ gives
\begin{align*}
    d\prior(p_{Y\mid X}\mid\bm{x},\dn)
    &\propto
    \left[
        \int
        p_X(\bm{x})\prod_{i=1}^n p_X(\bm{x}_i)\,
        d\prior_X(p_X)
    \right]
    \prod_{i=1}^n
    p_{Y\mid X}(y_i\mid\bm{x}_i)\,
    d\prior_{Y\mid X}(p_{Y\mid X})
    \\
    &\propto
    \prod_{i=1}^n
    p_{Y\mid X}(y_i\mid\bm{x}_i)\,
    d\prior_{Y\mid X}(p_{Y\mid X}),
\end{align*}
because the bracketed factor does not depend on $p_{Y\mid X}$.
Consequently,
\[
    \pi(y\mid\bm{x},\dn)
    =
    \int
    p_{Y\mid X}(y\mid\bm{x})\,
    d\prior(p_{Y\mid X}\mid\dn)
    =
    \ppdbar{y\mid\bm{x},\dn}.
\]
\end{proof}

As a direct corollary, we can also derive the corresponding quantities when a set of unlabeled features is available.

\begin{corollary}
\label{cor:unlabeled-predictive}
Let $\bm{V}=(\bm{V}_1,\ldots,\bm{V}_k)$ be additional unlabeled feature
rows that, conditional on the task $p$, are iid draws from $p_X$ and are
independent of the context and query row. Let $\bm{v}$ denote their realizations. Then the expected log loss over
all conditional predictors $q(y\mid \bm{x},\bm{v},d_n)$ is minimized,
uniquely up to null sets, by
\begin{equation}
    \pi(y\mid \bm{x},\bm{v},d_n)
    =
    \int p(y\mid \bm{x})\,
    d\Pi(p\mid \bm{x},\bm{v},d_n),
\end{equation}
where
\begin{equation}
    d\Pi(p\mid \bm{x},\bm{v},d_n)
    \propto
    p_X(\bm{x})
    \prod_{j=1}^k p_X(\bm{v}_j)
    \prod_{i=1}^n
    p_X(\bm{x}_i)p(y_i\mid \bm{x}_i)\,
    d\Pi(p).
    \label{eq:unlabeled-predictive}
\end{equation}
Under a separable prior, the marginal posterior over $p_{Y\mid X}$ does
not depend on $\bm{x}$ or $\bm{v}$, and hence
\[
    \pi(y\mid \bm{x},\bm{v},d_n)
    =
    \bar{\pi}(y\mid \bm{x},d_n).
\]
\end{corollary}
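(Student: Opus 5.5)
The plan is to mirror the proof of \cref{prop:nonseparable-minimizer}, enlarging the conditioning information from $(\bm{X},\Dn)$ to $(\bm{X},\bm{V},\Dn)$.

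\textbf{Optimality.} Let $\pi(\cdot\mid\bm{x},\bm{v},\dn)$ denote the conditional law of $Y$ given $(\bm{X},\bm{V},\Dn)$. For any conditional predictor $q$, the excess expected log loss over this law equals
\[
\E\bigl[\KL\bigl(\pi(\cdot\mid\bm{X},\bm{V},\Dn)\,\Vert\,q(\cdot\mid\bm{X},\bm{V},\Dn)\bigr)\bigr]\ge 0 .
\]
Equality holds if and only if $q$ agrees with $\pi$ almost surely. This gives both minimality and uniqueness up to null sets, and the argument is identical to the one already given.

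\textbf{Posterior.} Conditional on $p$, the context rows, the query row and the $k$ rows $\bm{V}_j\sim p_X$ are mutually independent. The joint density given $p$ is therefore
\[
p_X(\bm{x})\,p(y\mid\bm{x})\prod_{j=1}^k p_X(\bm{v}_j)\prod_{i=1}^n p_X(\bm{x}_i)\,p(y_i\mid\bm{x}_i).
\]
Apply Bayes' rule, conditioning on everything except $Y$; the factor $p(y\mid\bm{x})$ drops out of the posterior and yields \cref{eq:unlabeled-predictive}. Given $p$ and the features, $Y$ has law $p(\cdot\mid\bm{x})$. Hence the predictive is the posterior average of $p(y\mid\bm{x})$, exactly as in \cref{prop:nonseparable-minimizer}.

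\textbf{Separable case.} Write $d\Pi=d\Pi_X\,d\Pi_{Y\mid X}$ and marginalize \cref{eq:unlabeled-predictive} over $p_X$. The factor
\[
\int p_X(\bm{x})\prod_j p_X(\bm{v}_j)\prod_i p_X(\bm{x}_i)\,d\Pi_X
\]
is a constant independent of $p_{Y\mid X}$. So the marginal posterior of $p_{Y\mid X}$ is proportional to $\prod_i p_{Y\mid X}(y_i\mid\bm{x}_i)\,d\Pi_{Y\mid X}$.

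The same computation applied to $\Pi(\cdot\mid\dn)$ yields the same marginal, since there the $p_X$ factor is $\prod_i p_X(\bm{x}_i)$, which also integrates out. Because $p(y\mid\bm{x})$ depends on $p$ only through $p_{Y\mid X}$, both predictives integrate it against the same marginal. Thus $\pi(y\mid\bm{x},\bm{v},\dn)=\bar\pi(y\mid\bm{x},\dn)$.

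\textbf{Main obstacle.} The only delicate point is well-definedness: the normalizing constants must be positive and finite. I would handle this by stating the identities "wherever the posteriors are defined," as in \cref{prop:nonseparable-minimizer}, which holds for almost every realization of the inputs.
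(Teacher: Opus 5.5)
Your proposal is correct and follows essentially the same route as the paper. The joint likelihood given $p$ factorizes over the context, unlabeled and query rows; Bayes' rule gives the posterior; the KL argument of \cref{prop:nonseparable-minimizer} gives optimality and uniqueness; and under separability every feature-likelihood factor integrates out against $\Pi_X$. Your explicit check that $\Pi(\cdot\mid d_n)$ has the same $p_{Y\mid X}$-marginal, so that the result really equals $\bar\pi$, spells out a step the paper leaves implicit.
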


\begin{proof}
Conditional on the task $p$, the labeled context rows, the unlabeled rows,
and the query row are independent. Hence, the joint density of the observed
features and context labels, conditional on $p$, is proportional to
\[
    p_X(\bm{x})
    \prod_{j=1}^k p_X(\bm{v}_j)
    \prod_{i=1}^n
    p_X(\bm{x}_i)p(y_i\mid\bm{x}_i).
\]
Bayes' rule therefore gives \cref{eq:unlabeled-predictive}. Conditional on
$p$ and $\bm{x}$, the query label has distribution $p(\cdot\mid\bm{x})$,
so its conditional distribution given $(\bm{x},\bm{v},d_n)$ is
\[
    \int p(y\mid\bm{x})\,
    d\Pi(p\mid\bm{x},\bm{v},d_n).
\]
By the same log-loss argument as in Proposition~A.1, this conditional
distribution is the unique Bayes-optimal predictor up to null sets.

If $\Pi=\Pi_X\otimes\Pi_{Y\mid X}$, marginalizing the posterior over
$p_X$ yields
\[
    d\Pi(p_{Y\mid X}\mid\bm{x},\bm{v},d_n)
    \propto
    \prod_{i=1}^n
    p_{Y\mid X}(y_i\mid\bm{x}_i)\,
    d\Pi_{Y\mid X}(p_{Y\mid X}),
\]
because all factors involving $\bm{x}$, $\bm{v}$, and the context feature
likelihood depend only on $p_X$. Thus the unlabeled rows and the query
feature do not update the posterior over the labeling mechanism, which
gives
$\pi(y\mid\bm{x},\bm{v},d_n)=\bar{\pi}(y\mid\bm{x},d_n)$.
\end{proof}
\subsection{Non-separability of SCM priors}

As discussed before, we should not expect independence to hold for SCM priors. The following shows that it already fails in simple cases such as a linear Gaussian SCM with a random label node. 

\begin{proposition}
\label{prop:scm-nonseparable}
Let $(B, J)$ be random, where $B$ is an acyclic $d \times d$ matrix and $J$ is a node, with $d \ge 2$. Conditional on $(B, J)$, let $\bm{Z} = B\bm{Z} + \bm{E}$ with $\bm{E} \sim \mathcal{N}(0, D)$ for a fixed positive diagonal matrix $D$, and let $Y := Z_J$ and $\bm{X} := \bm{Z}_{-J}$. If \,$\operatorname{Var}(Y \mid \bm{X}, B, J)$ is nonconstant under the prior on $(B, J)$, the induced prior is non-separable. This holds in particular when $D = I_d$, $J$ is uniform and independent of $B$, and $B$ has a nonzero edge with positive probability.
\end{proposition}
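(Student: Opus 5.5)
Under the task prior, each draw of $(B,J)$ induces a Gaussian joint law of $(Y,\bm{X})$. Its feature law $p_X$ is $\mathcal{N}(0,\Sigma_{-J,-J})$, where $\Sigma=(I-B)^{-1}D(I-B)^{-\top}$. Its conditional label law $p_{Y\mid X}$ is Gaussian with mean $\beta^\top\bm{x}$ and variance $s^2=\Sigma_{JJ}-\Sigma_{J,-J}\Sigma_{-J,-J}^{-1}\Sigma_{-J,J}$. I will argue by contradiction. Suppose the induced prior on pairs $(p_X,p_{Y\mid X})$ were a product $\prior_X\otimes\prior_{Y\mid X}$. Then every statistic of $p_{Y\mid X}$ must be independent of every statistic of $p_X$. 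I will use the pair of statistics $s^2$ (the conditional variance, a functional of $p_{Y\mid X}$) and $\Sigma_{-J,-J}$ (a functional of $p_X$). The strategy is to show that these are deterministically linked, so that independence forces $s^2$ to be almost surely constant, contradicting the hypothesis.

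\textbf{The key identity.} The joint covariance satisfies $\det\Sigma=\det(I-B)^{-2}\det D=\det D$, because $B$ is acyclic and hence nilpotent, so $\det(I-B)=1$ after reordering to strictly triangular form. The Schur complement then gives $\det\Sigma=\det\Sigma_{-J,-J}\cdot s^2$, so
\[
s^2=\frac{\det D}{\det\Sigma_{-J,-J}}.
\]
Since $D$ is fixed, $s^2$ is a measurable function of $p_X$ alone. Under separability, $s^2$ would therefore be a function of $p_X$ and also a function of the independent random element $p_{Y\mid X}$. A random variable that is measurable with respect to two independent $\sigma$-fields is independent of itself, so it is almost surely constant. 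This contradicts the assumption that $\operatorname{Var}(Y\mid\bm{X},B,J)=s^2$ is nonconstant. I will also note that $s^2$ does not depend on $\bm{x}$, so ``nonconstant'' means nonconstant across $(B,J)$. Measurability of the maps $p\mapsto s^2$ and $p_X\mapsto\det\Sigma_{-J,-J}$ is routine, since Gaussian laws are parametrized by their moments.

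\textbf{The special case.} Take $D=I_d$, $J$ uniform and independent of $B$, and suppose $B$ has a nonzero edge with positive probability. I need to exhibit two events of positive probability on which $s^2$ takes different values. The clean route is the following.
\begin{itemize}
\item Choose $J$ to be a source node of the DAG, i.e.\ a node with no parents. Then $Z_J=E_J$, and $\operatorname{Var}(Z_J)=1$.
\item On the event that $B$ has an edge $J\to k$ with coefficient $b\neq0$, observing $\bm{X}$ (which contains $Z_k$) is informative about $Z_J$, so $s^2<1$. This follows from the Gaussian conditioning formula: $s^2\le\operatorname{Var}(Z_J\mid Z_k)<1$ whenever $\operatorname{Cov}(Z_J,Z_k)\neq0$.
\item On the other hand, choose $J$ to be a sink node, i.e.\ a node with no children. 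Then $Y=\sum_j B_{Jj}Z_j+E_J$, where $E_J$ is independent of $\bm{X}$, so $s^2=1$ exactly.
\end{itemize}
Every DAG has a sink. Given a realization of $B$ with a nonzero edge $i\to k$, the uniform independent choice of $J$ puts positive mass, at least $1/d$, on $J=i$ and on any sink. To make this rigorous, I take the edge's source $i$ itself as $J$ rather than a source node. In that case, $s^2=\operatorname{Var}(E_i\mid\text{all other }Z)$. This is strictly less than $1$ because $E_i$ enters $Z_k$ with nonzero total coefficient: the descendant $Z_k$ carries $E_i$ with coefficient $b$ plus path contributions, and I need a child whose total loading on $E_i$ is nonzero. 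I will take $k$ to be the earliest child of $i$ in the topological order; then the only path from $i$ to $k$ is the direct edge, so the loading is $b\neq0$.

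\textbf{Main obstacle.} I expect the main obstacle to be this last positivity step: $s^2<1$ when $J=i$ has a child. The argument is that $Z_J=\mu(Z_{\mathrm{pa}(J)})+E_J$ and that $Z_k$ has a nonzero $E_J$-loading. Taking $J=i$ and $Z_k$ with nonzero loading as above gives $s^2\le\operatorname{Var}(E_i\mid Z_{\mathrm{pa}(i)},Z_k)<1$. This inequality holds by Gaussian conditioning, since after conditioning on the parents of $i$, $E_i$ and $Z_k$ are still jointly Gaussian with nonzero covariance. Together with $s^2=1$ at a sink, this gives two values, each attained with positive probability, and the general criterion then yields non-separability.
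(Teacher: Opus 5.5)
Your proof is correct. The general criterion is the paper's argument: $\det(I-B)=1$, and the Schur complement gives $\det D=\det\Sigma_{-J,-J}\cdot s^2$, so $s^2$ is a functional of both $p_X$ and $p_{Y\mid X}$. Your phrasing, a random variable measurable with respect to two independent $\sigma$-fields is almost surely constant, just makes explicit the paper's ``the product is fixed'' step.

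For the special case you use the same dichotomy as the paper. A sink gives $s^2=1$, and a node with a nonzero outgoing edge gives $s^2<1$. Each is selected with conditional probability at least $1/d$ on the positive-probability event that $B$ has a nonzero edge. Where you differ is in proving the strict inequality. You condition on $Z_{\mathrm{pa}(i)}$ and on the earliest child $k$ in topological order, whose $E_i$-loading equals the single edge weight $b$, so path cancellations cannot occur. This step is valid: since $E_i$ is independent of the parents of $i$, you get $\operatorname{Cov}(E_i,Z_k\mid Z_{\mathrm{pa}(i)})=b\neq 0$.

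The paper reads off the precision matrix instead. For $D=I_d$, $\Sigma^{-1}=(I-B)^\top(I-B)$, so $s^{-2}=(\Sigma^{-1})_{JJ}=1+\sum_{k:J\to k}B_{kJ}^2$. This one line gives both the sink case and the strict inequality, and conditioning on all of $\bm{X}$ involves only direct edge weights. So your ``main obstacle'' and the earliest-child device are unnecessary there. Your route is more hands-on probabilistically but longer. The abandoned ``source node'' bullet can simply be dropped.
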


\begin{proof}
Let $\Sigma = (I-B)^{-1}D(I-B)^{-T}$. Acyclicity gives $\det(I-B) = 1$ and hence $\det\Sigma = \det D$. Since $\Sigma_{-J,-J}$ is positive definite, the block determinant formula and Gaussian conditioning give
\begin{align*}
    \det D
    = \det\Sigma
    &= \det(\Sigma_{-J,-J})\bigl(\Sigma_{JJ} - \Sigma_{J,-J}\Sigma_{-J,-J}^{-1}\Sigma_{-J,J}\bigr) \\
    &= \det(\Sigma_{-J,-J})\,\operatorname{Var}(Y \mid \bm{X}, B, J).
\end{align*}
The feature law determines $\det(\Sigma_{-J,-J})$, and the conditional label law determines $\operatorname{Var}(Y \mid \bm{X}, B, J)$. If the latter is nonconstant, these laws cannot be independent because the product is fixed.

Now consider the particular case $D = I_d$, with $J$ uniform and independent of $B$. The conditional variance is
\begin{align*}
    \operatorname{Var}(Y \mid \bm{X}, B, J)^{-1}
    &= (\Sigma^{-1})_{JJ}
     = 1 + \sum_{k: J \to k} B_{kJ}^2.
\end{align*}
Every finite acyclic graph has a node with no outgoing edges. If $J$ is such a node, the sum in the last display is empty and the conditional variance is $1$. If $J$ instead has a nonzero outgoing edge $J \to k$, it holds $B_{kJ} \ne 0$ and, thus, 
\begin{align*}
    \operatorname{Var}(Y \mid \bm{X}, B, J)
    &= \biggl(1 + \sum_{l: J \to l} B_{lJ}^2\biggr)^{-1}
     \le (1 + B_{kJ}^2)^{-1} < 1.
\end{align*}
On the event that $B$ has a nonzero edge, both types of node exist. Since $J$ is uniform and independent of $B$, each type is selected with conditional probability at least $1/d$. Since the event has positive probability, the conditional variance is nonconstant under the prior.
\end{proof}

\subsubsection{Proof of \cref{prop:mixture-nonseparability}} \label{proof:mixture-nonseparability}

\cref{prop:mixture-nonseparability} states that even for separable priors, mixtures are non-separable when both induced laws differ across regimes.

\begin{proof}
The marginals of $\prior$ are
\begin{align*}
    \prior_X
    &= (1 - \lambda)\prior_{X,0} + \lambda\prior_{X,1}, \\
    \prior_{Y\mid X}
    &= (1 - \lambda)\prior_{Y\mid X,0} + \lambda\prior_{Y\mid X,1},
\end{align*}
which implies
\begin{align*}
    \prior - \prior_X \otimes \prior_{Y\mid X}
    &= \lambda(1 - \lambda)
       (\prior_{X,1} - \prior_{X,0})
       \otimes (\prior_{Y\mid X,1} - \prior_{Y\mid X,0}).
\end{align*}
Since $\lambda(1-\lambda)>0$, the right-hand side vanishes iff $(\prior_{X,1}-\prior_{X,0})\otimes (\prior_{Y\mid X,1}-\prior_{Y\mid X,0})=0$. A tensor product of two finite signed measures is zero iff at least one factor is zero. Hence $\prior$ is separable iff $\prior_{X,0}=\prior_{X,1}$ or $\prior_{Y\mid X,0}=\prior_{Y\mid X,1}$.
\end{proof}

\subsection{Consistency and the finite-sample difference}

\subsubsection{Proof of \cref{cor:covariance-shift}} \label{proof:covariance-shift}

\begin{proof}
By~\cref{eq:nonseparable-ppd} and the definition of $\prior_n$, the posterior given the query has density $p_X(\bm{x}) / m(\bm{x} \mid \dn)$ relative to $\prior_n$, so $\ppd{y \mid \bm{x}, \dn} = \E_{\prior_n}[\task{y \mid \bm{x}}\, p_X(\bm{x})] / \E_{\prior_n}[p_X(\bm{x})]$, and $\E[AB] = \E[A]\E[B] + \operatorname{Cov}(A, B)$.
\end{proof}

\subsubsection{Proof of \cref{prop:nonseparable-consistency}} \label{proof:nonseparable-consistency}

\citet[Theorem 3.1]{nagler23statistical} shows for classification with iid rows that $\bar\pi$ converges to the conditional of the KL-optimal distribution in the prior's support. The proof is based on the task posterior $\prior_n$ formed from the joint likelihood, and it uses only that $\prior_n$ concentrates in Hellinger distance.
In the non-separable setting, the query feature itself changes the task posterior. The following proposition shows that concentration of the posterior over joint task laws is sufficient for convergence of both $\pi$ and $\bar\pi$, in both classification and regression problems. Sufficient KL-support and Hellinger-cover conditions for a unique KL-optimal density are given by \citet[Corollary 1]{deblasiwalker2013} and used in the PFN setting by \citet[Appendix A.2]{nagler23statistical}.

Recall that, for two densities $m$ and $p$ with respect to a common dominating measure $\mu$, the total variation and Hellinger distances are defined as
\begin{align*}
    \mathrm{TV}(m, p)
    &= \frac 12 \int |m - p|\, d\mu, \qquad
    H(m, p)
    = \left(\int (\sqrt{m} - \sqrt{p})^2\, d\mu\right)^{1/2}.
\end{align*}

\begin{proof}
    Since $\mathrm{TV}(m, p) \le H(m ,p)$ and $\mathrm{TV}(m, p) \le 1$, we get
\begin{align*}
    \delta_n 
    &= \int \mathrm{TV}(p, p^\ast)\, d\prior_n(p) \\
    &= \int_{\{H(p, p^\ast) \le \epsilon\}} \mathrm{TV}(p, p^\ast)\, d\prior_n(p)
        + \int_{\{H(p, p^\ast) > \epsilon\}} \mathrm{TV}(p, p^\ast)\, d\prior_n(p) \\
    &\le \epsilon
        + \prior_n\bigl(\{p : H(p, p^\ast) > \epsilon\}\bigr).
\end{align*}
Taking $n \to \infty$ followed by $\epsilon \to 0$ gives $\limsup_{n \to \infty} \delta_n = 0$ almost surely.

For convergence of the two predictives $\pi$ and $\bar\pi$, we first show that the integrated total variation of the conditional distributions is bounded by the total variation of the joint distributions. If $m$ and $p$ are joint densities on $\Ycal \times \Xcal$, the product of the label and feature spaces, with feature marginals $m_X$ and $p_X$, then
\begin{align}
    &\quad \; \int \mathrm{TV}\bigl(m(\cdot \mid \bm{x}), p(\cdot \mid \bm{x})\bigr)
        p_X(\bm{x})\, d\bm{x} \notag \\
    &= \frac12\iint
        \bigl|m(y \mid \bm{x})p_X(\bm{x}) - p(y, \bm{x})\bigr|\, dy\, d\bm{x} \notag\\
    &\le \frac12\iint
        \bigl|m(y \mid \bm{x})p_X(\bm{x}) - m(y, \bm{x})\bigr|\, dy\, d\bm{x}  + \frac12\iint
        \bigl|m(y, \bm{x}) - p(y, \bm{x})\bigr|\, dy\, d\bm{x} \notag\\
    &= \mathrm{TV}(m_X, p_X) + \mathrm{TV}(m, p) \notag\\
    &\le 2\,\mathrm{TV}(m, p).
    \label{eq:conditional-tv}
\end{align}

Now let $m_n(y, \bm{x}) = \int p(y, \bm{x})\, d\prior_n(p)$ be the posterior predictive joint density of one further row, with feature marginal $m_{n,X}$. Because the rows are conditionally iid, $\ppd{y \mid \bm{x}, \Dn} = m_n(y, \bm{x})/m_{n,X}(\bm{x})$. Applying \cref{eq:conditional-tv} to $m_n$, gives
\begin{align*}
    \int \mathrm{TV}\bigl(\ppd{\cdot \mid \bm{x}, \Dn}, p^\ast(\cdot \mid \bm{x})\bigr)
        p^\ast_X(\bm{x})\, d\bm{x}
    &\le 2\,\mathrm{TV}(m_n, p^\ast) \\
    &= 2\,\mathrm{TV}\left(\int p(y, \bm{x})\, d\prior_n(p), \, p^\ast\right) \\
    &\le 2 \int \mathrm{TV}(p, p^\ast)\, d\prior_n(p) \\
    &= 2\delta_n,
\end{align*}
The second inequality uses convexity of total variation. This proves the first bound.

For the second bound, recall that $\ppdbar{\cdot \mid \bm{x}, \Dn}$ is the posterior mixture of $p(\cdot \mid \bm{x})$. Convexity of total variation applied to this mixture gives, for every $\bm{x}$,
\begin{align*}
    \mathrm{TV}\bigl(\ppdbar{\cdot \mid \bm{x}, \Dn}, p^\ast(\cdot \mid \bm{x})\bigr)
    &\le \int \mathrm{TV}\bigl(p(\cdot \mid \bm{x}), p^\ast(\cdot \mid \bm{x})\bigr)
        \, d\prior_n(p).
\end{align*}
Integrating over $p^\ast_X(\bm{x})$, changing the order of integration, and \cref{eq:conditional-tv} give
\begin{align*}
    \int \mathrm{TV}\bigl(\ppdbar{\cdot \mid \bm{x}, \Dn}, p^\ast(\cdot \mid \bm{x})\bigr)
        p^\ast_X(\bm{x})\, d\bm{x}
    &\le \int \biggl[\int
        \mathrm{TV}\bigl(p(\cdot \mid \bm{x}), p^\ast(\cdot \mid \bm{x})\bigr)
        p^\ast_X(\bm{x})\, d\bm{x}\biggr] d\prior_n(p) \\
    &\le 2\int \mathrm{TV}(p, p^\ast)\, d\prior_n(p) \\
    &= 2\delta_n.
\end{align*}
\end{proof}

\subsection{Misspecification}

\subsubsection{Proof of \cref{prop:misspecified-limit}} \label{proof:misspecified-limit}

\begin{proof}
(i) Under separability, the support of the induced pairs $(p_X, p_{Y\mid X})$ is the Cartesian product $ \mathrm{supp}(\Pi_X) \times \mathrm{supp}{(\Pi_{Y \mid X})}$. For every supported mechanism $p$, it then contains the pair $(p^\ast_X, p^\ast_{Y\mid X})$. Joint optimality of $p^\ast$ and the KL chain rule therefore give
\begin{align*}
    \KL(p_{0,X}\,\Vert\,p^\ast_X)+\KL^{Y\mid X}(p_0\,\Vert\,p^\ast)
    &\le \KL(p_{0,X}\,\Vert\,p^\ast_X)+\KL^{Y\mid X}(p_0\,\Vert\,p).
\end{align*}
The feature divergence  $\KL(p_{0,X}\,\Vert\,p^\ast_X)$ is finite by the assumption on $p^\ast$. Canceling it and taking the infimum over $p$ proves (i).

(ii) We construct an explicit model to verify the claim. Suppose $X$ and $Y$ are independent $\mathcal N(0, 1)$ variables under $p_0$, and let $t>0$ be arbitrary. Consider a prior with positive mass on only the following two task densities $p^{(A)}$ and $p^{(B)}$:
\begin{align*}
    p^{(A)}:\quad & X\sim\mathcal N(\sqrt{2(t+1)},1), & Y\mid X &\sim\mathcal N(0,1),\\
    p^{(B)}:\quad & X\sim\mathcal N(0,1), & Y\mid X &\sim\mathcal N(\sqrt{2t},1).
\end{align*}
By the standard formula for KL divergence between Gaussians,
\begin{align*}
    \KL( \mathcal N(\mu_0, \sigma_0^2)\,\Vert\,\mathcal N(\mu_1, \sigma_1^2)) = \frac12\left(\frac{(\mu_0-\mu_1)^2}{\sigma_1^2} + \frac{\sigma_0^2}{\sigma_1^2} - 1 - \log\left(\frac{\sigma_0^2}{\sigma_1^2}\right)\right), 
\end{align*}
the feature and conditional KL divergences are 
\begin{align*}
    \bigl(\KL(p_{0,X}\,\Vert\,p^{(A)}_X),\KL^{Y\mid X}(p_0\,\Vert\,p^{(A)})\bigr)
    &=(t+1,0),\\
    \bigl(\KL(p_{0,X}\,\Vert\,p^{(B)}_X),\KL^{Y\mid X}(p_0\,\Vert\,p^{(B)})\bigr)
    &=(0,t).
\end{align*}
Thus, $p^{(B)}$ uniquely minimizes the joint KL divergence, while its conditional KL exceeds the supported minimum by $t$. The prior contains both the true feature law and the true conditional label law, but not their pairing $p_0$. The prior is therefore non-separable, and the penalty is unbounded as $t \to \infty$.
\end{proof}

\subsection{Martingale checks}
\label{app:martingale}

Fix a label event $A\subseteq\Ycal$, a query $\bm{x}$ and a future feature $\bm{x}_{n+1}$. The diagnostic draws $Y_{n+1}\sim q(\cdot\mid\bm{x}_{n+1},\dn)$, appends $(Y_{n+1},\bm{x}_{n+1})$ to the context, and compares the expected updated prediction at $\bm{x}$ with its initial value. For the exact target, the tower property gives
\begin{equation*}
    \E_{Y_{n+1}\sim\pi(\cdot\mid\bm{x}_{n+1},\bm{x},\dn)}
    \bigl[\pi(A\mid\bm{x},(\dn,(Y_{n+1},\bm{x}_{n+1})))\bigr]
    =\pi(A\mid\bm{x},\bm{x}_{n+1},\dn).
    \label{eq:martingale-drift}
\end{equation*}
Unlike the diagnostic in \cref{eq:martingale-check}, this identity conditions the label sampler on the tracked query and retains the additional feature evidence on the right-hand side. The feature arguments represent distinct rows even when their values coincide. Sampling the entire future row from its joint posterior predictive conditional on $(\bm{x},\dn)$ recovers zero drift by another application of the tower property.

\paragraph{Cumulative drift in the experiments.}
For $\breve\pi\in\{\pi,\bar\pi\}$, fix an initial context $\dn$, a query $\bm{x}$, and a repeated future feature $\bm{x}^\star$. Starting from $D_n=\dn$, generate additional
rows sequentially according to
\[
    Y_{n+k+1}\mid D_{n+k}
    \sim \breve\pi(\cdot\mid\bm{x}^\star,D_{n+k}),
    \qquad
    D_{n+k+1}
    = \bigl(D_{n+k},(Y_{n+k+1},\bm{x}^\star)\bigr).
\]
Each appended row updates the context through its joint likelihood. For the label event $A=\{1\}$, the cumulative expected drift after $K$ appended rows is
\[
    \Delta_K
    =
    \E\!\left[
        \breve\pi(1\mid\bm{x},D_{n+K})
        \,\middle|\,\dn,\bm{x},\bm{x}^\star
    \right]
    - \breve\pi(1\mid\bm{x},\dn).
\]
where the expectation is over future labels generated from $\breve\pi$. The feature $\bm{x}^\star$ is either the query itself or one context feature selected uniformly once per context-query pair and held fixed throughout the rollout. The cumulative-drift panel of \cref{fig:martingale} reports the root mean square of these differences across $M=2000\cdot 8=16000$ context-query pairs:
\[
    \operatorname{RMS}_{\breve\pi}(K)
    =
    \left[
        \frac{1}{M}\sum_{i=1}^{M}
        \bigl(\Delta_K^{(i)}\bigr)^2
    \right]^{1/2},
\]
where $\Delta_K^{(i)}$ denotes $\Delta_K$ evaluated for the $i$th context--query pair and its selected future feature. Thus, the expectation over future labels is taken before squaring, and the square root returns the statistic to probability units. Squaring prevents positive and negative expected drifts from cancelling across pairs, while giving larger drifts more weight. The statistic measures the cumulative expected change from the initial prediction; it does not sum squared one-step changes.

For binary labels at a repeated feature, the predictive after $k$ appended rows depends on their labels only through the number of class-1 outcomes. We therefore propagate the probabilities of the $k+1$ possible counts, integrating over all future label sequences without Monte Carlo sampling.

\section{Remarks and further details}
\label{app:remarks}

\paragraph{Details on the amortization gap}
\label{rem:amortization}
Following \citet{hulskamp2026on}, we use \emph{amortization gap} for the total excess of a PFN over the Bayes-optimal predictor, covering both the limits of the model class and those of finite pretraining. Pointwise, it is $\gap(\bm{x},\dn)=\KL\bigl(\ppd{\cdot\mid\bm{x},\dn}\,\Vert\,\pfn{\cdot\mid\bm{x},\dn}\bigr)$. During pretraining, the context size $N\sim\sizeprior$ is drawn independently of the task $p\sim\prior$. For the random-size context $\DN=(Y_i,\bm{X}_i)_{i=1}^N$, write $\mathcal{L}(\btheta)=-\E[\log\pfn{Y\mid\bm{X},\DN}]$ and $\mathcal{L}^\ast=-\E[\log\ppd{Y\mid\bm{X},\DN}]$, where $\E=\E_{\sizeprior}\E_{\prior}$ averages over context sizes, tasks, and data. Since $\pi$ minimizes the pretraining loss (\cref{prop:nonseparable-minimizer}), its expectation equals $\mathcal{L}(\btheta)-\mathcal{L}^\ast$ with $\mathcal{L}^\ast=-\E_{\sizeprior}\E_{\prior}[\log\ppd{Y\mid\bm{X},\DN}]$. Both statements require $\pi$ as the reference, though under a non-separable prior, a reference computed from the context posterior alone yields $\bar\pi$ instead. An example would be MCMC draws from $\prior(\cdot\mid\dn)$ that are not reweighted by the query's feature likelihood $p_X(\bm{x})$. The discrepancy to this reference then decomposes as
\begin{equation*}
    \pfn{y\mid\bm{x},\dn}-\ppdbar{y\mid\bm{x},\dn}
    = \bigl(\pfn{y\mid\bm{x},\dn}-\ppd{y\mid\bm{x},\dn}\bigr) + \predgap,
\end{equation*}
so the predictive gap $\predgapnoarg$ (\cref{cor:covariance-shift}) is added on top of the approximation error, and any divergence or loss difference computed against $\bar\pi$ mixes the two.
On real data, labels do not follow $\pi$, so even with the correct reference the loss difference is no longer a divergence and can be negative (cf.~\citealp{hulskamp2026on}).

\paragraph{Counterexample for martingale zero-drift under non-separable priors} Consider two equally likely tasks on $X\in\{0,1,2\}$ with feature probabilities $(0.5,0.4,0.1)$ and $(0.5,0.1,0.4)$, and class 1 probabilities at $X=0$ of $0.2$ and $0.8$, respectively. At $X=1,2$, let both tasks assign class 1 probability $0.5$. This prior is non-separable, but fixing the query and all future features at zero yields a martingale: their common feature likelihood $0.5$ cancels from every update, leaving ordinary Bayesian label updates. Non-separability therefore permits drift without requiring it at every query.

The two terms can also cancel while both are nonzero. With no initial context, take two equally likely tasks on $X\in\{0,1\}$ with $p_X(0)$ equal to $0.8$ and $0.2$, and deterministic labels $Y=1$ and $Y=0$, respectively. At query and future feature $x=0$, the initial class 1 probability is $4/5$ and conditioning on both feature rows gives $16/17$. The sampled label identifies the task, so the feature-update and sampling terms are $12/85$ and $-12/85$: the net drift is zero.

\paragraph{Transductive channels in current benchmarks and models}
\label{rem:transductive}
Currently, the handling of test data differs substantially between different benchmark suites as well as between different models. Below we summarize important aspects with regard to possible transductive channels in some of the most popular benchmarks and models contained in them:  
\begin{itemize}
    \item \emph{TabArena} \citep{erickson2026tabarena} passes the full test fold to the model. It has no written rule on transductive use. Its only check is a smoke test that compares the prediction for a single row alone and within the batch, and methods can be exempted from it.
    \item \emph{TALENT} \citep{liu2025talent} processes test minibatches in its neural-model evaluation routine, so a prediction can depend on the composition of the batch.
    \item \emph{ScoringBench} \citep{landsgesell2026scoringbench} passes all test features jointly to its predictive-distribution interface.
    \item \emph{BeyondArena} \citep{purucker2026beyond} explicitly documents transductive preprocessing for label-per-group tasks. It computes feature summaries within each test group and attaches them to that group's rows \citep[Appendix~C.2]{purucker2026beyond}. This reflects tasks whose prediction unit is naturally a group and does not by itself make comparisons unfair.
\end{itemize}
On the model side, the commonly used attention masks, for example in TabPFN and TabICL, prevent direct attention between test rows. TabICL also excludes test rows from the shared context of its column embedding by default and includes them only with \texttt{embed\_with\_test=True}. While these restrictions blocks large interactions, they do not make the complete pipeline independent of the other test rows:
\begin{itemize}
    \item The standard forward path of TabPFN-v2.5 computes constant-feature masks and feature-group normalization parameters from the combined training and test input.
    \item TALENT's bundled LimiX implementation \citep{zhang2025limix} performs feature filtering, categorical-type inference, and ordinal encoding on the combined training and test features.
    \item TALENT's default SAINT configuration enables intersample attention, so its predictions explicitly depend on the other rows in the test minibatch \citep{somepalli2021saint}.
    \item Proprietary models served through an API, such as TabPFN-Thinking, receive the whole test fold server-side and can only be probed as a black box for batch invariance.
\end{itemize}
In total, an evaluation protocol should specify the prediction unit, which unlabeled observations are available and when, and how each method uses them, including any dependence on the composition of the test batch. Per-row prediction calls enforce invariance only if preprocessing and retained state have not already used other test rows. When transductive inference is permitted, all methods should receive the same information, although they need not exploit it identically.

\section{Experimental details} \label{app:experiments}

\subsection{Ordered-Cluster Prior} \label{app:exp_ordered_cluster_prior}

\paragraph{Prior} We introduce the \textit{ordered-cluster prior} that lets us study our theoretical claims empirically, where we pretrain TabICL models from scratch on this custom prior. It is a finite non-separable classification prior over task distributions $p_{a,b}$, where the centers $a<b$ are chosen uniformly from the $36$ ordered pairs on the grid $\{-4,\ldots,4\}$. For each row, the class is drawn uniformly, with class $0$ generating features from $\mathcal{N}(a,0.5^2)$ and class $1$ from $\mathcal{N}(b,0.5^2)$. Because the same cluster centers determine both the feature distribution and the labeling mechanism, an unlabeled query feature can provide information about its label. More precisely, conditioning on a query $\bm{x}$ changes the label prediction exactly when its likelihood under the posterior tasks is correlated with their class probability at $\bm{x}$ (see \cref{eq:covariance-shift}). This effect is particularly pronounced when the context contains only one class. In this case, the corresponding center is observed, while the other remains uncertain, so the query location helps identify the missing center. This is also the effect illustrated in \cref{fig:toy_query_evidence}. The finite task space makes both the query-updated predictive $\pi$ and the context-updated predictive $\bar{\pi}$ exactly enumerable, providing a controlled setting in which to determine which predictive standard pretraining recovers.

\cref{fig:approximation_target_exp}: We train a $2.6$M parameter version of TabICL on the non-separable ordered-cluster prior using two different query-labeling schemes: standard pretraining and a decoupled-query control, denoted \texttt{decoupled}. In both schemes, we sample a latent task $p$ and use it to generate the context $\dn$ and query features $\bm{x}$. Under standard pretraining, the query labels are generated from the same task $p$, so their conditional distribution is the query-updated predictive $\pi(y\mid\bm{x},\dn)$. Under the decoupled setting, we instead draw a new task $p'$ from the context posterior $\prior(\cdot\mid\dn)$, without conditioning on the query features, and generate each query label from $p'(\cdot\mid\bm{x})$. Conditional on the context, this decouples the query feature from the task used to generate its label. The resulting marginal label distribution is therefore the context-updated predictive $\bar{\pi}$ rather than $\pi(y\mid\bm{x},\dn)$. The two schemes thus receive identical contexts and query features but have different Bayes-optimal targets: $\pi$ for standard pretraining and $\bar{\pi}$ for the decoupled setting. At context size $n=2$, the figure plots the network difference $q_1-\bar{\pi}_1$ against the exact target difference $\pi_1-\bar{\pi}_1$ (predictives for class 1), averaged over three training seeds after $50\,000$ optimizer steps. For each query both coordinates are differences against the same baseline: the exact difference between the two predictives of~\cref{eq:ppd}, $\ppd{1 \mid \bm{x}, \dn} - \ppdbar{1 \mid \bm{x}, \dn}$ (horizontal), and the network's own difference $\pfn{1 \mid \bm{x}, \dn} - \ppdbar{1 \mid \bm{x}, \dn}$ (vertical); a network at $\pi$ therefore lies on the dashed diagonal and one at $\bar\pi$ on the dotted horizontal. Queries are grouped into $20$ bins of equal width in the exact difference, with the bin edges running from its $0.5$th to its $99.5$th percentile; bins holding fewer than $20$ queries are dropped, and each marker averages both coordinates over its bin, with the band giving one standard deviation across the three training seeds. Standard pretraining closely follows the identity line, indicating recovery of $\pi$, whereas the decoupled setting remains near zero, indicating recovery of $\bar{\pi}$. This means that pretraining on a non-separable prior actually approximates the query-updated predictive $\pi$ of \cref{eq:ppd} and in this experiment, using the wrong reference can make the apparent approximation error roughly $95$ times larger than the true one.

The two panels differ for a simple reason. In a single-class context, all labeled examples come from one cluster, so the location of the other cluster remains uncertain. The query feature is then valuable unlabeled evidence about that unseen cluster and can substantially change the prediction, producing a large difference between $\pi$ and $\bar{\pi}$. In a two-class context, examples from both clusters already reveal both centers and the query adds little new information, so the target difference, and hence the plotted range, is much smaller ($y$-axis).

\paragraph{Training details} All networks are TabICL architectures \citep{qu2025tabicl} trained from scratch on the prior described above and consist of a column embedding, row interaction, and an in-context transformer. Two sizes, 2.6M parameters (embedding 64, blocks 2/2/4) and 8.1M (embedding 96, blocks 2/2/6). We use the Muon optimizer \citep{jordan2024muon} with standard cross-entropy loss, learning rate $8 \cdot 10^{-4}$, cosine schedule with restarts and 2\% warmup, gradient clipping at 10, a batch size of 64 datasets, and 50\,000 optimizer steps, with one pass over freshly simulated datasets.

\paragraph{Semi-supervised gains on the ordered-cluster prior}\label{app:semi-ordered-cluster}

On the ordered-cluster prior, the semi-supervised target $\ppd{y\mid\bm{x},\bm{v},\dn}$ of \cref{cor:unlabeled-predictive} can be computed exactly, which lets us check whether a trained PFN realizes the gain from unlabeled rows. We insert $k$ unlabeled rows $\bm{v}$ from the generating task between the context and the queries. We pretrain TabICL with access to them either through the in-context transformer or through the column embedding (\texttt{embed\_with\_test}), and train a control with $\bm{v}$ masked. At $n=2$, the unlabeled rows lower the Bayes-optimal loss by $11\%$, $34\%$ and $38\%$ for $k=1,8,32$. For $k\ge8$ and $n\le4$, the expected divergence $\KL(\ppd{\cdot\mid\bm{x},\bm{v},\dn}\,\Vert\,q_{\btheta})$ of the in-context network stays below $2\%$ of this gain, and that of the column-embedding network below $5\%$. The masked control stays at $\pi$. Both networks approximate their own targets closely, so their loss difference, which matches the gain to within $0.2\%$, reflects the larger information set rather than better approximation. For a more thorough experiment regarding the use of additional unlabeled queries, refer to \cref{app:semitabicl}.

\subsection{Feature effects under a prior with constant class probability}\label{app:feature-effects}

\paragraph{Prior} \Cref{fig:interpretability} is computed exactly on a finite non-separable classification prior with two tasks $p_s$, $s\in\{-1,1\}$, each with prior probability $1/2$. Under $p_s$ the three features are independent. $X_1$ and $X_2$ take values in $\{-2,\dots,2\}$ with $p_s(X_1=t)\propto\exp\{-(t-0.75s)^2/2\}$ and $p_s(X_2=t)\propto\exp\{-(t-0.5s)^2/2\}$, and $X_3$ is uniform on $\{-1,1\}$. The label does not depend on the features, $Y\mid\bm{X}\sim\mathrm{Bernoulli}(c_s)$ with $c_{-1}=0.2$ and $c_1=0.8$. Every task therefore has a constant conditional label distribution, whereas $X_1$ and $X_2$ carry information about the task and hence about $c_s$. $X_3$ has the same distribution under both tasks and serves as a negative control. All $50$ feature combinations have positive probability under both tasks, so every input evaluated by PD, ICE and SHAP lies in the support.

\paragraph{Predictives} Let $w=\prior(s=1\mid\dn)$. By the symmetry of the grid, the normalizing constants cancel and $\log\{r_{p_1}(\bm{x})/r_{p_{-1}}(\bm{x})\}=1.5x_1+x_2$, so that
\begin{equation*}
    \ppd{1\mid\bm{x},\dn}=0.2+0.6\,\sigma\bigl(\operatorname{logit}w+1.5x_1+x_2\bigr),
    \qquad
    \ppdbar{1\mid\bm{x},\dn}=0.2+0.6\,w,
\end{equation*}
where $\sigma$ is the logistic function. The figure uses the context $\dn$ with $n=2$ rows, $\bm{x}_1=(-1,1,-1)$ with $y_1=0$ and $\bm{x}_2=(1,-1,1)$ with $y_2=1$. Both the feature and the label likelihood ratios of this context equal one, so $w=1/2$ and $\ppdbar{1\mid\bm{x},\dn}=0.5$ for all $\bm{x}$. The context was chosen for illustration, not drawn from the prior.

\paragraph{Explanations} All methods are applied to $f(\bm{x})=\ppd{1\mid\bm{x},\dn}$ and to $\ppdbar{1\mid\bm{x},\dn}$ with the same background distribution, the feature predictive $B=m(\cdot\mid\dn)$. In this case it is the equal-weight mixture of the two feature distributions. All quantities are finite sums over the grid, and no network is trained. The partial dependence of $X_1$ is $\mathrm{PD}_1(t)=\E_{B}[f(t,X_2,X_3)]$, with $(X_2,X_3)$ drawn from their joint distribution under $B$. The ICE curves are $t\mapsto f(t,x_2,x_3)$ for $x_2\in\{-2,\dots,2\}$ \citep{goldstein2015peeking}. Since $f$ does not depend on $x_3$, each $x_2$ yields a single curve. Exact Shapley values are obtained by enumerating all $2^3$ coalitions at each of the $50$ grid points for two value functions. The marginal game uses $v_{\bm{x}}(S)=\E_{B}[f(\bm{x}_S,\bm{X}_{\bar S})]$, where $\bm{X}_{\bar S}$ follows its joint marginal under $B$. The conditional game uses $v_{\bm{x}}(S)=\E_{B}[f(\bm{X})\mid\bm{X}_S=\bm{x}_S]$ \citep{janzing2020feature}. The right panel reports $\E_{\bm{X}\sim B}\lvert\phi_j(\bm{X})\rvert$, which for $\pi$ equals $0.134$ and $0.073$ for $X_1$ and $X_2$ under the marginal game and $0.133$ and $0.084$ under the conditional game. Because $\bar\pi$ is constant, its PD and ICE curves are constant and its Shapley values vanish under both games. The attribution of $X_3$ under $\pi$ is zero in both games as well, since $f$ does not depend on $x_3$ and $X_3$ is independent of $(X_1,X_2)$ under $B$.

\subsection{SemiTabICL: Architectural details and evaluation}
\label{app:semitabicl}

\paragraph{Architecture}
SemiTabICL uses the three-stage TabICL architecture with a column embedding (2 blocks, embedding dimension 128, 128 inducing points), a row interaction (2 blocks, 3 CLS tokens) and an in-context transformer (6 blocks, width $3\cdot128=384$), each with 8 attention heads, for 8.5M parameters in total. Every dataset is laid out as $[\,\dn \mid \bm{v} \mid \text{queries}\,]$ with $n$ labeled, $k$ unlabeled and $m$ query rows. Wherever TabICL attends to the training rows, i.e.\ in the inducing-point attention of the column embedding and in the in-context transformer, the attention context consists of the first $n+k$ rows, so labeled and unlabeled rows attend to each other and queries attend to both. Unlabeled rows carry a single learned vector in place of the label embedding. The loss is the cross-entropy on the query rows only. The baseline TabICL is identical except for this vector and never sees unlabeled rows.

\paragraph{Prior}
Both models are trained on the unchanged graph-SCM prior of TabICLv2 \citep{qu2026tabiclv2} (random DAGs with 2--32 nodes, 1--100 features, up to 10 classes, predictability filters). The unlabeled rows are further samples of the same SCM whose labels are dropped. For SemiTabICL, every dataset has such a pool: the labeled rows make up 30\% of a sequence length drawn log-uniformly from $[14,334]$, so $n\in[4,100]$ and $m\approx2.3\,n$, and the pool has $k=\mathrm{round}(r\,n)$ rows with $r$ log-uniform on $[1,9]$. The number of classes is capped at $\max(2,\lfloor\min(n,m)/2\rfloor)$. The median synthetic dataset has $n=20$, $k=59$ and $m=47$ rows. The baseline is trained on the same distribution of $(n,m)$ without a pool.

\paragraph{Training}
We run pretraining for $30\,000$ optimizer steps, 1\% warmup, with bfloat16 precision, and batches of 64 datasets in groups of 16 that share their sequence length and pool size. Each model trains in about two hours on four NVIDIA H200 GPUs.

\paragraph{Compared methods}
All methods use an ensemble of 8 members and fit preprocessing on the labeled rows only, and all see identical splits. We compare SemiTabICL with its additional unlabeld context rows (\emph{SemiTabICL}), SemiTabICL with a the unlabeled context whose columns are permuted independently, which keeps the marginals but destroys the joint structure (\emph{shuffled pool}), the baseline TabICL on the labeled rows (\emph{TabICL}), and self-training (ST) with scikit-learn's \texttt{SelfTrainingClassifier} \citep{pedregosa2011scikit,yarowsky1995unsupervised} (confidence threshold $0.75$, at most 10 rounds) around TabICL and around a gradient-boosted tree classifier (with default settings). As an upper bound, the oracle gives TabICL the pool together with its labels. The purpose of the shuffled unlabeled context is to assess to what extent possible gains from the additional unlabeled context come from only refining the marginal distributions per column vs. the model actually learning to leverage the full distribution of the additional data.

\paragraph{semi-TALENT}
We use the 150 classification datasets of TALENT \citep{liu2025talent} with at most 100 features and 10 classes, excluding the subset reserved for validation during pretraining. From the official training split, we draw a stratified labeled sample of $n\in\{16,32,64,96\}$ rows and a pool of $k=r\,n$ further rows with $r\in\{2,4,8,10\}$, where pools of one labeled sample are nested. Each budget is repeated five times and evaluated on the official test split. On average over datasets, TabICL reaches $72.6\%$ accuracy, SemiTabICL $73.3\%$ and the oracle $77.9\%$. 

\paragraph{semi-UCI}
Following \citet{guo2010extensive}, we use their 26 UCI datasets with raw features and repeat a 4-fold stratified cross-validation 10 times. In each outer training fold, 10\% of the rows are labeled and the remainder forms the pool, so that $k=9n$, and the held-out fold is the test set. Two datasets with more than 10 classes (letter, vowel) and one with a preprocessing failure (sick) are excluded, leaving 23 datasets. Grouped by the categories of \citet{schoelkopf2012causal}, the structural gain is larger on the 11 anticausal or confounded datasets ($+0.70$ pp) than on the three causal ones ($+0.30$ pp), in line with their hypothesis. With only three causal datasets, however, the difference is not significant (exact permutation test, $p=0.42$).

\begin{table}[h]
    \centering
    \caption{Paired comparisons of SemiTabICL and its controls. $\Delta$acc is the difference in accuracy (percentage points) and $\Delta$NLL the reduction in log loss (nats), both positive when the first method is better. For each dataset, differences are averaged over all its splits; for semi-TALENT, these pool the full $4\times4$ grid of $n\in\{16,32,64,96\}$ and $r\in\{2,4,8,10\}$ with five repetitions each. Reported values are means of these per-dataset differences. For semi-TALENT, win is the share of the 150 datasets with a positive difference, and $p$ is a two-sided Wilcoxon signed-rank test of whether the per-dataset differences are centred at zero.}
    \label{tab:semitabicl}
    \small
    \begin{tabular}{l rrr rrr rr}
        \toprule
        & \multicolumn{6}{c}{semi-TALENT } & \multicolumn{2}{c}{semi-UCI } \\
        \cmidrule(lr){2-7}\cmidrule(lr){8-9}
        Comparison & $\Delta$acc & win & $p$ & $\Delta$NLL & win & $p$ & $\Delta$acc & $\Delta$NLL \\
        \midrule
        SemiTabICL vs.\ TabICL           & $+0.63$ & $0.81$ & $5\mathrm{e}{-16}$ & $+0.015$ & $0.85$ & $6\mathrm{e}{-19}$ & $+0.48$ & $+0.014$ \\
        SemiTabICL vs.\ shuffled     & $+0.66$ & $0.67$ & $1\mathrm{e}{-10}$ & $+0.017$ & $0.65$ & $2\mathrm{e}{-7}$  & $+0.47$ & $+0.017$ \\
        Shuffled vs.\ TabICL        & $-0.03$ & $0.59$ & $0.56$             & $-0.001$ & $0.63$ & $0.06$             & $+0.01$ & $-0.003$ \\
        TabICL ST vs.\ TabICL & $-0.33$ & $0.41$ & $3\mathrm{e}{-4}$  & $-0.440$ & $0.03$ & $6\mathrm{e}{-26}$ & $-0.53$ & $-0.393$ \\
        GBDT ST vs.\ GBDT     & $+0.45$ & $0.77$ & $3\mathrm{e}{-14}$ & $-0.577$ & $0.01$ & $2\mathrm{e}{-26}$ & $+0.46$ & $-0.452$ \\
        Oracle vs.\ TabICL               & $+5.18$ & $1.00$ & $2\mathrm{e}{-26}$ & $+0.125$ & $1.00$ & $2\mathrm{e}{-26}$ & $+5.33$ & $+0.120$ \\
        \bottomrule
    \end{tabular}
\end{table}

\subsection{Software and computational environment}
\label{app:software_comp_env}

Our experiments were implemented on top of the official TabICL software repository \citep{qu2025tabicl,qu2026tabiclv2} based on \texttt{pytorch} \citep{paszke_2019_PyTorchImperative}. TabICL pretraining runs were carried out on NVIDIA H200 GPUs with a full pretraining run with up to 50k optimizer steps taking less than 3 hours of wall-clock time for both the 2.6M parameter model as well as the 8.1M and 8.5M parameter model.

\section{Related work}

\paragraph{Semi-supervised learning (in ICL)} \citet{schoelkopf2012causal} point out the importance of analyzing the causal mechanisms underlying datasets for supervised learning and highlight under what conditions additional unlabeled data should prove useful with semi-supervised approaches. \citet{li2026and} show, from an expressivity perspective, that linear attention needs depth to exploit unlabeled in-context rows under a binary Gaussian-mixture prior, which is a special case of what we call a non-separable prior. Though they mention TabPFN as an application, the question studied is orthogonal to ours: we ask whether a network pretrained on a non-separable prior natively exploits additional unlabeled rows, as its objective says it should, whereas they wrap an off-the-shelf TabPFN in an external pseudo-labeling loop and show that self-training can improve predictive performance. \citet{swelam2025does} study whether the internal representations of TabPFN contain causal information and how well these can be repurposed for causal discovery by training an additional decoder. Their positive findings provide adjacent evidence that predictive pretraining yields feature representations that not only identify the latent mechanism $p(y \mid \bm{x})$ but also $p(\bm{x})$ as required for the query-updated predictive of \cref{eq:nonseparable-ppd}.

\paragraph{Bayesian targets and direct reference comparisons.}\citet{muller_2022_TransformersCana} introduce PFNs and show that prior-data NLL equals the expected KL to the PPD up to the Bayes-optimal loss, so held-out NLL measures the amortization gap whenever that loss is known, which is measured for a GP with fixed hyperparameters and a Gaussian target whose mean is linear in the context labels. In these settings the PFN approaches the exact NLL with more pretraining. Furthermore, \citet{muller2024bayes} extends the analysis by enumerating finite libraries of one-dimensional step, sine, and linear functions and show on selected datasets that predictive means follow the exact posterior, including non-obvious generalizations. This setup also separates failures of the posterior from failures of the network, such as an inability to count repeated observations. For intractable priors, the existing comparisons rank a PFN against NUTS, variational inference, or MLE-II \citep{muller_2022_TransformersCana}, primarily designed as speed comparisons and are very informative as such, but a Monte Carlo reference whose NLL never falls below the PFN's cannot locate the Bayes-optimal loss, so the gap under a general prior has not been measured. The later literature has largely turned to predictive performance on real benchmarks \citep{hollmann2025tabpfnv2,qu2025tabicl,grinsztajn2025tabpfnv25,qu2026tabiclv2}, which establishes that PFNs predict well but not what they compute.

\paragraph{Neural processes} Neural processes (NPs) share the PFN goal of learning to predict from a context set across a distribution of tasks. In particular, PFNs are closely related to transformer NPs, a transformer-based extension of conditional NPs \citep{nagler23statistical,nguyen2022tnp}. Conditional NPs directly parameterize predictive distributions~\citep{garnelo18a}, while latent NPs introduce a shared latent variable to model dependencies between predictions~\citep{garnelo2018neural}. Attentive neural processes use query-specific context representations~\citep{kim2018attentive}, and transformer neural processes formulate prediction through likelihood-based sequence modeling~\citep{nguyen2022tnp}. Closely related to our target-identification question, \citet{foong2020meta} show, under their stated conditions, that the population maximum-likelihood objective recovers the ground-truth stochastic process's prediction map. This supports interpreting learned predictions as approximations to conditional distributions, without guaranteeing exact Bayesian inference for a finite trained network.

\end{document}